\DeclareMathOperator*{\argmax}{arg\,max}
\def\X{{\mathcal X}}
\def\H{{\mathcal H}}
\def\R{{\mathbb R}}
\def\E{{\mathbb E}}
\def\B{{\mathbb B}}
 \let\mathscr\relax%
\newcommand{\powerset}{\raisebox{.15\baselineskip}{\Large\ensuremath{\wp}}}
\icmltitlerunning{Connecting Interpretability and Robustness in Decision Trees through Separation}
\begin{document}

\twocolumn[
\icmltitle{Connecting Interpretability and Robustness in Decision Trees through Separation}

\icmlsetsymbol{equal}{*}

\begin{icmlauthorlist}
\icmlauthor{Michal Moshkovitz}{ucsd}
\icmlauthor{Yao-Yuan Yang}{ucsd}
\icmlauthor{Kamalika Chaudhuri}{ucsd}
\end{icmlauthorlist}

\icmlaffiliation{ucsd}{University of California, San Diego}

\icmlkeywords{machine learning, adversarial robustness, interpretable}

\vskip 0.3in
]

\printAffiliationsAndNotice{\icmlEqualContribution} %

\begin{abstract}
Recent research has recognized interpretability and robustness as essential properties of trustworthy classification.
    Curiously, a connection between robustness and interpretability was empirically observed, but the theoretical reasoning behind it remained elusive. In this paper, we rigorously investigate this connection. Specifically, we focus on interpretation using decision trees and robustness to $l_{\infty}$-perturbation. Previous works defined the notion of $r$-separation as a sufficient condition for robustness. 
    We prove upper and lower bounds on the tree size in case  the data is
    $r$-separated. We then show that a tighter bound on the size is possible
    when the data is linearly separated. We provide the first algorithm with
    provable guarantees both on robustness, interpretability, and accuracy in
    the context of decision trees. Experiments confirm that our algorithm yields
    classifiers that are both interpretable and robust and have high accuracy.
    The code for the experiments is available at \url{https://github.com/yangarbiter/interpretable-robust-trees}.
\end{abstract}

\section{Introduction}
Deploying machine learning (ML) models in high-stakes fields like healthcare, transportation, and law, requires the ML models to be trustworthy.
 Essential ingredients of trustworthy models are  explainability and robustness: if we do not understand the reasons for the model's prediction, we cannot trust the model; if small changes in the input modifies the model's prediction, we cannot trust the model. 
Previous works hypothesized that there is a strong connection between robustness and explainability. They empirically observed that robust models lead to better explanations \cite{chen2019robust,ross2017improving}.  
In this work, we take a rigorous approach towards understanding the connection between robustness and interpretability.

We focus on binary predictions, where each example has $d$ features and the label of each example is in $\{-1,+1\}$, so an ML model is a hypothesis $f:\mathbb{R}^d\rightarrow\{-1,1\}.$ We want our model to be (i) robust to adversarial $\ell_\infty$ perturbations, i.e., for a small distortion, $\|\delta\|_\infty$, the model's response is similar, $f(x)=f(x+\delta)$, for most examples $x$, (ii) interpretable, i.e., the model itself is simple and so self-explanatory, and (iii) have high-accuracy. A common type of interpretable models are decision trees \cite{molnar2020interpretable}, which we call \emph{tree-based explanation} and focus on in this paper.

Prior literature \cite{yang2020adversarial}
showed that data \emph{separation} is a sufficient condition for a robust and accurate classifier.
A data is $r$-separated if the distance between the two closest examples with different labels is at least $2r$. Intuitively, if $r$ is large, then the data is well-separated. A separated data guarantees that points with opposite labels are far from each other, which is essential to construct a robust model. 

In this paper, we examine whether separation implies tree-based explanation. We first show that for a decision tree to have accuracy strictly above $1/2$ (i.e., better than random), the data must be bounded. Henceforth we assume that the data is in $[-1,1]^d.$ We start with a trivial algorithm that constructs a tree-based explanation with complexity (i.e., tree size) $2^{O(d/r)}$. For constant $r$, we show a matching lower bound of $2^{\Omega(d)}.$ Thus, we have a matching lower and upper bound on the explanation size of $2^{\Theta(d)}$. Thus, separation implies robustness and interpretability. Unfortunately, for a large number of features, $d$, the explanation size is too high to be useful in practice.

In this paper, we show that designing a simpler explanation is possible with a stronger separability assumption --- linear separability with a $\gamma$-margin.
This assumption was recently used to gain a better understanding of neural networks \cite{soudry2018implicit,nacson2019convergence,shamir2020gradient}.
More formally, this assumption means that there is a vector $w$ with $\|w\|=1$ such that $yw\cdot x\geq\gamma$ for each labeled example $(x,y)\in\mathbb{R}^d\times\{-1,1\}$ in the data \cite{shalev2014understanding}. Our main building block is a proof that, under the linearity assumption, there is always at least one feature that provides non-trivial information for the prediction.
To formalize this, we use the known notion of \emph{weak learners} \cite{kearns1988learning}, which  guarantees the existence of hypothesis with accuracy bounded below by more than $\nicefrac12$. 

Our weak-learnability theorem, together with \citet{kearns1999boosting}, implies that a popular CART-type algorithm \cite{breiman1984classification} is guaranteed to give a decision tree with size $1/\epsilon^{O(1/\gamma^2)}$ and accuracy $1-\epsilon$. Therefore, under the linearity assumption, we can design a tree with complexity independent of the number of features. Thus, even if the number of features, $d$, is large, the interpretation complexity is not affected. This achieves our first goal of constructing an interpretable model with provable guarantees.

Recently, several research papers gave a theoretical justification for CART's empirical success \cite{brutzkus2020id3,brutzkus2019optimality,blanc2019top,blanc2020provable,fiat2004decision}.
Those papers assume that the underlying distribution is uniform or features chosen independently. For many cases, this assumption does not hold. For example, in medical data, there is a strong correlation between age and different diseases. 
 On the other hand, we give a theoretical justification for CART without resorting to the feature-independence assumption. We use, instead, the linear separability assumption. We believe that our method will allow, in the future, proofs with less restrictive assumptions.

So far, we showed how to construct an interpretable model, but we want a model that will not be just interpretable but also robust. Decision trees are not robust by-default  \cite{chen2019robust}. For example, a small change at the feature at the root of the decision tree leads to an entirely different model (and thus to entirely different predictions): the model defined by the left subtree and the model defined by the right subtree. We are left with the questions: can robustness and interpretability co-exist? How to construct a model that is guaranteed to be both interpretable and robust? To design such a model, we focus on a specific kind of decision tree --- risk score \cite{ustun2017optimized}. A risk score is composed of several conditions (e.g., $age \geq 75$) and each matched with a weight, i.e., a small integer. A score $s(x)$ of an example $x$ is the weighted sum of all the satisfied conditions. The label is then a function of the score $s(x)$. A risk score is a specific case of decision trees, wherein at each level in the tree, the same feature is queried. The number of parameters required to represent a risk score is much smaller than their corresponding decision trees, hence they might be considered more interpretable than decision trees \cite{ustun2017optimized}.

We design a new learning algorithm, BBM-RS, for learning risk scores that rely on the Boost-by-Majority (BBM) algorithm \cite{freund1995boosting} and our weak learner theorem. It yields a risk score of size $O(\gamma^{-2}\log(1/\epsilon))$ and accuracy $1-\epsilon$. Thus, we found an algorithm that creates a risk score with provable guarantees on size and accuracy. As a side effect, note that BBM allows to control the interpretation complexity easily. Importantly, we show that risk scores are also guaranteed to be robust to $\ell_\infty$ perturbations, by deliberately adding a small noise to dataset (but not too much noise to make sure that the noisy dataset is still linearly separable). Therefore, we design a model that is guaranteed to have high accuracy and be both interpretable and robust, achieving our final goal. 

Finally, in Section~\ref{sec:experiment}, we test the validity of the separability assumption and the quality of the new algorithm on real-world datasets that were used previously in tree-based explanation research. On most of the datasets, less than $12\%$ points were removed to achieve an $r$-separation with $r\geq0.05$. For comparison, for binary feature-values $\{-1,1\}$, and $\ell_\infty$ distance, the best value for $r$ is $r=1$. The added percentage of points required to be removed for the dataset to be linearly separable is less than $7\%$ on average. Thus, we observe that real datasets are close to being separable and even linearly separable.  
In the next step, we explored the quality of our new algorithm, BBM-RS. Even though it has provable guarantees only if the data is linearly separable, we run it on real datasets that do not satisfy this property. 
We compare BBM-RS to different algorithms for learning: decision trees \cite{quinlan1986induction}, small risk scores \cite{ustun2017optimized}, and robust decision trees \cite{chen2019robust}. All algorithms try to maximize accuracy, but different algorithms try to, additionally, minimize interpretation complexity or maximize robustness.
None of the algorithms aimed to optimize both  interpretability and robustness. We compared the (i) interpretation complexity, (ii) robustness, and (iii) accuracy of all four algorithms. We find that our algorithm provides a model with better interpretation complexity and  robustness while having comparable accuracy. 

To summarize, our main contributions are:

 \textbf{Interpretability under separability: optimal bounds.} We show lower and upper bounds on decision tree size for $r$-separable data with $r=\Theta(1)$, of $2^{\Theta(d)}$. Namely, our upper bound proves that for any separable data, there is a tree of size $2^{O(d)}$, and the lower bound proves that separability cannot guarantee an explanation smaller than $2^{\Omega(d)}$.
  
\textbf{Algorithm with provable guarantees on interpretability and robustness.} 
Designing algorithms that have provable guarantees both on interpretability,  robustness, and accuracy in the context of decision trees is highly sought-after, yet there was no such algorithm before our work.   
We design the first learning algorithm that has provable guarantees both on interpretability, robustness, and accuracy of the returned model, under the assumption that the data is linearly separable with a margin. 

While the CART algorithm is empirically highly effective, its theoretical analysis has been elusive for a long time. As a side effect, we provide an analysis of CART under the assumption of linear separability. To the best of our knowledge, this is the first proof with a distributional assumption that does not include feature independence.

    \textbf{Experiments.} We verify the validity of our assumptions empirically and show that for real datasets, if a small percentage of points is removed then we get a  linear separable dataset. We also compare our new algorithm to other algorithms that return interpretable models \cite{quinlan1986induction,ustun2017optimized,chen2019robust}  and show that if the goal is to design a model that is both interpretable and robust, then our method is preferable.

\section{Related Work}
\textbf{Post-hoc explanations.}
There are two main types of explanations: post hoc explanations \cite{ribeiro2016model} and intrinsic explanations \cite{rudin2019stop}. Algorithms for post hoc explanation take as an input a black-box model and return some form of explanation. Intrinsic explanations are simple models, so the models are self-explanatory. The main advantage of algorithms for post hoc explanations \cite{lundberg2017unified,lundberg2018consistent, ribeiro2016should,koh2017understanding,ribeiro2018anchors,deutch2019constraints,li2020quantitative,boer2020personal} is that they can be used on any model. However, they host a variety of problems: 
they introduce a new source of error stemming from the explanation method \cite{rudin2019stop}; they can be fooled \cite{slack2020fooling}; some explanations methods are not robust to common pre-processing steps \cite{kindermans2019reliability}, and can be independent both of the model and the data generating process \cite{adebayo2018sanity}. Because of the critics against post hoc explanations, in this paper, we focus on intrinsic explanations.

\textbf{Explainability and robustness.} 
Different works research the intersection of explanation and robustness of black-box models   
\cite{lakkaraju2020robust}, decision trees \cite{chen2019robust,andriushchenko2019provably}, and deep neural networks  \cite{szegedy2013intriguing,goodfellow2014explaining,madry2017towards,ross2017improving}. %
Unfortunately, the quality of these methods are only verified empirically. %
On the theoretical side, most works analyzed explainability and  robustness separately. Explainability was researched for supervised learning \cite{garreau2020looking, garreau2020explaining,mardaoui2020analysis,hu2019optimal} and unsupervised learning  \cite{moshkovitz2020explainable,frost2020exkmc,laber2021price}.  For robustness, \citet{cohen2019certified} showed that the technique of randomized smoothing has robustness guarantees. 
\citet{ignatiev2019relating}  connected adversarial examples and a different type of explainability from the point of view of formal logic.

\textbf{Risk scores.} 
Ustun and Rudin \cite{ustun2017optimized} designed a new algorithm for learning risk scores by solving an appropriate optimization problem. They focused on constructing an interpretable model with high accuracy and did not consider robustness, as we do in this work. 

\section{Preliminaries}
We investigate models that are (i) with high-accuracy, (ii) robust, and (iii) interpretable, as formalized next. 

\textbf{High accuracy.} We consider the task of binary classification over a domain $\X\subseteq\R^d$. Let $\mu$ be an underlying probability distribution\footnote{In the paper, we will assume that $\mu$ has additional properties, like separation or linear separation.} over labeled examples $\X\times\{-1,+1\}.$ 
 The input to a learning algorithm $\mathcal{A}$ consists of a labeled sample  $S \sim\mu^m$, and its output is a hypothesis $h:\X\rightarrow\{-1,+1\}.$
 The accuracy of $h$ is equal to $\Pr_{(x,y)\sim\mu}(h(x)= y)$. The sample complexity of $\mathcal{A}$ under the distribution $\mu$, denoted $m(\epsilon,\delta) : (0, 1)^2\rightarrow  \mathbb{N}$, is a function
mapping desired accuracy $\epsilon$ and confidence $\delta$ to the minimal positive integer $m(\epsilon,\delta)$ such that for any
$m \geq m(\epsilon,\delta)$, with probability at least $1-\delta$ over the drawn of an i.i.d. sample $S\sim\mu^m$,
the output $A(S)$ has accuracy of at least $1-\epsilon$.

\textbf{Robustness.}
We focus on the $\ell_\infty$ ball, $\B$, and denote the $r$-radius ball around a point $x\in\X$ as $\B(x,r)$.
A hypothesis $h:\X\rightarrow\{-1,+1\}$ is \emph{robust} at $x$ with radius $r$ if for all $x'\in \B(x,r)$ we have that $h(x)=h(x').$
In \cite{wang2018analyzing}, the notion of \emph{astuteness} was introduced to measure the robustness of a hypothesis $h$. The astuteness
of $h$ at radius $r > 0$ under a distribution $\mu$ %
is $$\Pr_{(x,y)\sim\mu}[\forall x'\in \B(x,r).\; h(x')=y].$$
For a hypothesis to have high astuteness the positive and negative examples need to be  separated. A binary labeled data is \emph{$r$-separated} if for every two labeled examples $(x^1,+1)$,$(x^2,-1)$, it holds that 
$\|x^1-x^2\|_\infty\geq 2r.$

\textbf{Interpretability.} We focus on intrinsic explanations, also called interpretable models \cite{Rudin19}, where the model itself is the explanation.  There are several types of interpretable models, e.g., logistic regression, decision rules, and anchors \cite{molnar2020interpretable}. One of the most fundamental interpretable models, which we focus on in this paper, is 
\emph{decision trees} \cite{quinlan1986induction}. In a decision tree, each leaf corresponds to a label, and each inner node corresponds to a threshold and a feature. The label of an example is the leaf's label of the corresponding path.  

In the paper we also focus on a specific type of decision trees, \emph{risk scores} \cite{ustun2019learning}, see Table~\ref{tab:risk_score}. It is defined by a series of $m$ conditions and a weight for each condition. Each condition compares one feature to a threshold, and the weights should be small integers. A score, $s(x)$, of an example $x$ is the number of satisfied conditions out of the $m$ conditions, each multiplied by the corresponding weight. 
The prediction of the risk model $f$ is the sign of the score, $f(x)=sign(s(x))$. 
A risk score can be viewed as a decision tree where at each level there is the same feature-threshold pair. Since the risk-score model has fewer parameters than the corresponding decision tree, it may be considered more interpretable.

\begin{table}[ht]
    \setlength{\tabcolsep}{1pt} 
    \small
    \centering
    \begin{tabular}{lcc|c}
    \toprule
    feature                          & \multicolumn{2}{c}{weights} & \\
                                     & LCPA & BBM-RS &  \\
    \midrule
    Bias term                     & -6 & -7 & + ... \\
    Age $\geq$ 75                    &  - &  2 & + ... \\
    Called in Q1                     &  1 &  2 & + ... \\
    Called in Q2                     & -1 &  - & + ... \\
    Called before                    &  1 &  4 & + ... \\
    Previous call was Successful     &  1 &  2 & + ... \\
    Employment variation rate $<-1$  &  5 &  4 & + ... \\
    Consumer price index $\geq 93.5$ &  1 &  - & + ... \\
    3 month euribor rate $\geq 200$  & -2 &  - & + ... \\
    3 month euribor rate $\geq 400$  &  5 &  - & + ... \\
    3 month euribor rate $\geq 500$  &  2 &  - & + ... \\
    \midrule
                            & & total score  = &     \\
    \bottomrule
    \end{tabular}
    \caption{
    Two risk score models: LCPA \cite{ustun2019learning} and our new BBM-RS algorithm on the bank dataset~\cite{moro2014data}.
    Each satisfied condition is multiplied by its weight and summed. Bias term is always satisfied. 
    If the total score $>0$, the risk model predicts ``1" (i.e., the client will open a bank account after a marketing call). All features are binary (either $0$ or $1$).}
    \label{tab:risk_score} 
\end{table}

\section{Separation and Interpretability}
We want to understand whether separation implies the existence of a small tree-based explanation. Our first observation is that the data has to be bounded for a tree-based explanation to exist. If the data is unbounded, then to achieve a training error slightly better than random, the tree size must depend on the size of the training data, see Section~\ref{subsec:general_bounded}, Theorem~\ref{thm:unbounded_lower_bound}. 

In Section~\ref{subsec:general_case} we investigate lower and upper bounds for decision tree's size, assuming separation. Specifically, in Theorem~\ref{thm:general_sep_implies_dt}, we show that if the data is bounded, in $[-1,1]^d$, then $r$-separability implies a tree based-explanation with tree depth $O(\nicefrac{d}{r})$. Importantly, the depth of the tree is independent of the training size, so a tree-based explanation exists. Nevertheless, even for a constant $r$, the size of the tree is exponential in $d$. 
In Theorem~\ref{thm:general_sep_lower_dt}, we show that this bound is tight as there is a $1$-separable dataset that requires an exponential size to achieve accuracy even negligibly better than random. The conclusion from this section is that if all we know is that the data is $r$-separability for constant $r$, we understand the interpretation complexity: it is exactly $2^{\Theta(d)}$. However, the explanation is of exponential size in $d$. In Section~\ref{sec:linearly_separable}, we improve the interpretation complexity by assuming a stronger separability assumption. We will assume linear separability with a margin. All proofs are in Section~\ref{apx:separation_and_explainability}.

\subsection{Bounded}\label{subsec:general_bounded}
In Theorem~\ref{thm:unbounded_lower_bound}, we show that the data has to be bounded for a small decision tree to exist.  
In fact, boundedness is necessarily, even if the data is constrained to be linearly separable. 
For any tree size $s$ and a given accuracy, we can construct a linearly-separable dataset such that any tree of size $s$ cannot have the desired accuracy.  

\begin{restatable}{thm}{unboundedLowerBound}
\label{thm:unbounded_lower_bound}
 For any tree size $s$ and $\gamma>0$, there is a dataset in $\R^2$ that is linearly separable, and any decision tree with size $s$ has accuracy less than $\frac12+\gamma.$
\end{restatable}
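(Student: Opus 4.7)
The plan is to exhibit, for each integer $s$ and each $\gamma>0$, a finite linearly-separable sample in $\mathbb{R}^2$ that an axis-aligned partition with only $s$ pieces cannot classify much above $\tfrac12$. Concretely, for a large integer $N$ to be chosen at the end, I would place $N$ positive points $p_i=(i,i)$ and $N$ negative points $n_i=(i,i-1)$ for $i=1,\ldots,N$. These are separated by the line $y=x-\tfrac12$, so the dataset is linearly separable by a non-axis-aligned hyperplane, and it is unbounded (which matches the paper's point that boundedness is necessary).

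A decision tree of size $s$ partitions $\mathbb{R}^2$ into at most $s$ axis-aligned cells $R_1,\ldots,R_s$; writing $|P_j|$ and $|N_j|$ for the number of positive/negative sample points in $R_j$, the accuracy, optimized over cell labels, equals
\[
  \frac{1}{2N}\sum_j \max(|P_j|,|N_j|)\;=\;\frac12+\frac{1}{4N}\sum_j\bigl||P_j|-|N_j|\bigr|.
\]
So it is enough to show $\sum_j\bigl||P_j|-|N_j|\bigr|=O(s)$; a strong enough form of this is the per-cell inequality $\bigl||P_j|-|N_j|\bigr|\le 1$.

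To establish this per-cell inequality, I would fix a cell $R_j=[a,b]\times[c,d]$ and observe that the indices of the positive points lying in $R_j$ form the integer set $\{i\in[1,N]:\max(a,c)\le i\le\min(b,d)\}$, while the indices of the negative points in $R_j$ form $\{i\in[1,N]:\max(a,c+1)\le i\le\min(b,d+1)\}$. The second interval is obtained from the first by shifting each endpoint by an amount in $[0,1]$, and a direct $\lfloor\cdot\rfloor/\lceil\cdot\rceil$ computation shows that the integer counts of two such shifted intervals differ by at most $1$; the only corner case to check is when one of the two index sets is empty, where the other interval has length strictly less than $2$ and hence contains at most one integer.

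Putting the pieces together, the accuracy is at most $\tfrac12 + s/(4N)$, so taking $N=\lceil s/(4\gamma)\rceil+1$ forces accuracy strictly less than $\tfrac12+\gamma$. The main obstacle is the per-cell bound, which is also where the specific geometry is doing the work: placing the two classes on parallel lines at slope $1$ with unit spacing ensures that axis-aligned cuts cannot simultaneously separate $p_i$ from $n_i$ for many different $i$ without using many distinct horizontal cuts, and the unit spacing keeps the ``shift'' in the integer-counting step controlled by $1$ rather than blowing up.
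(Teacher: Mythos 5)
Your construction and bound are correct, but you take a genuinely different route from the paper. The paper places, for each $i$, a balanced ``diamond'' of four points around $(i,-i)$ (two positive, two negative at distance $\epsilon$), checks homogeneous linear separability via $w=(0.5,0.5)$, and then proves by induction over the tree that the set of points reaching any node is a union of consecutive groups plus at most two partial groups; since each group is balanced, every leaf classifies at most $|P_v|/2+4$ of its points correctly, giving accuracy at most $\tfrac12+4s/n$. You instead put the two classes on parallel unit-shifted diagonals $(i,i)$ and $(i,i-1)$ and argue directly about the leaf regions: each leaf of an axis-aligned tree is a box, and your key lemma is the lattice-counting fact that the positive and negative index sets inside any box are the integers of two intervals whose endpoints are shifted by amounts in $[0,1]$, hence the two counts differ by at most $1$; summing $\max(|P_j|,|N_j|)$ over at most $s$ cells yields accuracy at most $\tfrac12+s/(4N)$. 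Your argument avoids the induction on the tree structure entirely and gives a tighter per-leaf imbalance ($\le 1$ versus $\le 4$), at the cost of leaning on the explicit rectangular form of the cells; the paper's induction tracks how successive splits act on the ordered groups, which is slightly more flexible but heavier. Both proofs exploit the same phenomenon: axis-parallel cuts cannot split interleaved diagonal data, so every region a size-$s$ tree carves out is nearly label-balanced.

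Two small repairs to your write-up. First, your corner-case justification is wrong as stated: an interval of length strictly less than $2$ can contain two integers (e.g.\ $[1,2]$), so ``length $<2$, hence at most one integer'' does not follow. The fix is direct: if the negative-index interval $[\max(a,c+1),\min(b,d+1)]$ contains two integers $k,k+1$, then $\max(a,c)\le k\le\min(b,d)$, so $k$ lies in the positive-index interval; symmetrically, if the positive interval contains $k,k+1$, then $k+1$ lies in the negative interval. Hence one count being $0$ forces the other to be at most $1$, and when both counts are nonzero the floor/ceiling computation already gives a difference of at most $1$ (clamping both intervals to $[1,N]$ and the half-open boundaries of tree cells do not affect this). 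Second, your separator $y=x-\tfrac12$ is affine, whereas the paper's notion of linear separability is homogeneous ($yw\cdot x\ge\gamma$ with no bias term); this is harmless because your finite dataset is also homogeneously separable, e.g.\ by $w\propto(\tfrac{1}{2N}-1,\,1)$, whose margin degrades with $N$ — but the theorem requires no fixed margin, so the conclusion stands.
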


\subsection{Upper and lower bounds}\label{subsec:general_case}
Assuming the data in $[-1,1]^d$ is $r$-separated, Theorem~\ref{thm:general_sep_implies_dt} tells us that one can construct a decision tree with depth $6d/r$ and training error $0$ (and from standard VC-arguments also accuracy $1-\epsilon$, with enough examples) . Importantly, the depth of the tree is independent of the training size $n$. Nevertheless, it means the size of the tress is exponential in $d$. The idea of the proof is to fine-grain the data to bins of size about $r$, in each coordinate. From this construction, it is clear that the returned model is robust at any training data. 

\begin{restatable}{thm}{generalSepImpliesDt}
\label{thm:general_sep_implies_dt}
For any labeled data in $[-1,1]^d\times\{-1,1\}$ that is $r$-separated, there is a decision tree of depth at most $\frac{6d}{r}$ which has a training error $0$. 
\end{restatable}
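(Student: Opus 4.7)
The plan is to discretize $[-1,1]^d$ into a uniform axis-aligned grid whose cells have $\ell_\infty$-diameter strictly less than $2r$, and then write down a decision tree that routes every input to the cell containing it. Under the $r$-separation assumption any two oppositely labeled training points are at $\ell_\infty$-distance at least $2r$, so each cell can contain training points of only one label, and labeling each leaf by that label immediately yields zero training error.

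Concretely, I would partition each axis $[-1,1]$ into $6/r$ consecutive intervals of width $r/3$, producing a product grid whose cells have side $r/3$ and hence $\ell_\infty$-diameter $r/3 < 2r$. The tree is built coordinate by coordinate: at the root, a chain of axis-parallel threshold queries $x_1 \leq -1 + k r/3$ for $k = 1, \ldots, 6/r - 1$ identifies which horizontal strip contains the point; from each terminus of this chain I recurse with the same linear scan on $x_2$, then $x_3$, and so on through $x_d$. Each coordinate contributes at most $6/r$ comparisons to any root-to-leaf path, so the total depth is at most $d \cdot (6/r) = 6d/r$, matching the claim. Each leaf is then labeled by the unique training label present in its cell, with the label of empty cells chosen arbitrarily.

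The only step that genuinely uses the hypothesis is verifying that the leaf labels are well-defined: if two oppositely labeled training points $x,x'$ lay in the same cell, they would satisfy $\|x - x'\|_\infty \leq r/3 < 2r$, contradicting $r$-separation. Everything else is a grid-and-bookkeeping argument, and I do not anticipate a real obstacle. The design choices (bin width $r/3$ and a linear rather than binary scan along each coordinate) are tuned to hit the clean constant $6d/r$; a balanced search would in fact deliver the shallower depth $d \lceil \log_2(6/r) \rceil$, but the sequential scheme above is the simplest one matching the stated bound, and its axis-aligned character also makes transparent the remark about robustness at training points, since each training point sits in the interior of its cell.
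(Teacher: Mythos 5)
Your proposal is correct and follows essentially the same route as the paper: discretize $[-1,1]$ along every coordinate at scale $O(r)$ (the paper uses spacing $\Delta\in[r,2r)$, you use $r/3$), identify the grid cell of the input with axis-aligned threshold queries, and label each leaf by the unique label in its cell, which is well-defined because two points in one cell are at $\ell_\infty$-distance below $2r$. Aside from trivial bookkeeping (taking $\lceil 6/r\rceil$ intervals when $6/r$ is not an integer, which still gives depth at most $6d/r$, and the aside that a training point need not lie in the interior of its cell), there is nothing to fix.
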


Theorem~\ref{thm:general_sep_lower_dt} proves a matching lower bound by constructing a dataset such that any tree that achieves error better than random, the tree size must be exponential in $d$. 
The dataset proving this lower bound is parity. More specifically, it contains the points $\{-1,+1\}^d$ and the label of each point $x$ is the xor of all of its coordinates.

\begin{restatable}{thm}{generalSepLowerDt}
\label{thm:general_sep_lower_dt}
There is a labeled dataset in $[-1,1]^d$ which is $1$-separated and has the following property. For any $\gamma>0$ and any decision tree $T$ that achieves accuracy $0.5+\gamma$, the size of $T$ is at least $\gamma 2^{d}.$
\end{restatable}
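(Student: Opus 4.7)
The plan is to use parity on the Boolean cube as the hard distribution. Take $X = \{-1,+1\}^d \subset [-1,1]^d$, label each point by $y(x) = \prod_{i=1}^d x_i$, and let $\mu$ be uniform on this labeled dataset. $1$-separation is immediate: any two distinct cube points differ in at least one coordinate, and in that coordinate their values are $+1$ and $-1$, so their $\ell_\infty$-distance is at least $2 = 2r$ with $r=1$.

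The core of the lower bound is a counting argument exploiting that parity is a balanced function on every nontrivial subcube. Fix any decision tree $T$ and a leaf $\ell$. The root-to-$\ell$ path tests some set of coordinates $Q_\ell \subseteq [d]$ and fixes each tested value; thus exactly $n_\ell = 2^{d-|Q_\ell|}$ cube points reach $\ell$. When $|Q_\ell| < d$, the parity $y(x)$ restricted to the points reaching $\ell$ equals $\sigma_\ell \cdot \prod_{i \notin Q_\ell} x_i$ for some fixed $\sigma_\ell \in \{-1,+1\}$, which is balanced on the free coordinates: exactly $n_\ell/2$ of the points reaching $\ell$ are labelled $+1$ and exactly $n_\ell/2$ are labelled $-1$. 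So whichever single label the leaf predicts, it correctly classifies at most $n_\ell/2 = 2^{d-|Q_\ell|-1}$ of its points. When $|Q_\ell| = d$ the leaf contains a single cube point, which it classifies correctly in the best case.

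Let $L_d$ denote the number of depth-$d$ leaves of $T$. Combining the two cases with the partition identity $\sum_{\ell : |Q_\ell| < d} 2^{d - |Q_\ell|} + L_d = 2^d$ gives
\[
\#\{\text{correctly classified}\} \leq \sum_{\ell : |Q_\ell| < d} 2^{d-|Q_\ell|-1} + L_d = \tfrac{1}{2}(2^d - L_d) + L_d = 2^{d-1} + \tfrac{L_d}{2}.
\]
Dividing by $2^d$, the accuracy is at most $\tfrac{1}{2} + L_d/2^{d+1}$, so achieving accuracy $\tfrac{1}{2} + \gamma$ forces $L_d \geq \gamma \cdot 2^{d+1} \geq \gamma \cdot 2^d$; since the size of $T$ is at least the number of its leaves, and a fortiori at least $L_d$, the claimed bound follows. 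The only nonroutine step is the balanced-leaf claim, and that falls out directly from the product formula above; everything else is bookkeeping.
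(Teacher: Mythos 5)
Your proposal is correct and follows essentially the same route as the paper's proof: parity labels on $\{-1,+1\}^d$, the observation that parity is balanced on the set of points reaching any leaf that does not fix all $d$ coordinates, and a count of the singleton leaves giving accuracy at most $\tfrac12 + L_d/2^{d+1}$, hence size at least $\gamma 2^{d}$. The only cosmetic difference is that the paper counts leaves reached by exactly one example rather than depth-$d$ leaves, which also quietly handles the fact that a threshold test may fail to split (or may empty) a subcube, so $n_\ell$ need not be exactly $2^{d-|Q_\ell|}$; this does not affect the bound.
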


\section{Linear Separability}\label{sec:linearly_separable}
In the previous section, we showed that $\Theta(1)$-separability implies a decision tree with size exponential in $d$, and we showed a matching lower bound. This section explores a stronger assumption than separability that will guarantee a smaller tree, i.e., a simpler explanation. This assumption is that the data is linearly separable with a margin. More formally, data is $\gamma$-linearly separable if there is $w\in\R^d$, $\|w\|_1=1$, such that for each positive example $x$ it holds that  $w\cdot x \geq \gamma$ and for each negative example $x$ it holds that $w\cdot x \leq -\gamma.$  
Note that without loss of generality $w_i\geq 0$ (if the inequality does not hold, multiply the $i$-th feature in each example by $-1$). Thus, we can  interpret $w$ as a distribution over all the features. 
One might interpret the highest $w_i$ as the most important feature. However, this will be misleading. For example, if all that data has the same value at the $i$-th feature, this feature is meaningless. 
In the experimental section,  Section~\ref{sec:experiment}, we show that real datasets are close to being linearly separable.

The key step in designing a model which is both interpretable and robust is to show that one feature can provide a nontrivial prediction. In particular, in Section \ref{subsec:weak_learner} we show that the hypotheses class, $\H_t=\{h_{i,\theta}\}$, is a weak learner, where 
 $$
  h_{i,\theta}(x) =
  \begin{cases}
    \begin{array}{lr}
        +1 & \text{if } x_i \geq \theta \\
        -1 & \text{o.w. }
        \end{array}
        \end{cases}
  $$
This class is similar to the known decision stumps class, but it does not contain hypotheses of the form ``if $x_i\leq\theta$ then $+1$ else $-1$". The reason will become  apparent in Section~\ref{subsec:robust}, but for now, we will hint that it helps achieve robustness. 

In Section~\ref{subsec:id3}, we observe that weak learnability immediately implies that the known CART algorithm constructs a tree of size independent of $d$ \cite{kearns1999boosting}. %
Unfortunately, decision trees are not necessarily robust. To overcome this difficulty, we focus on one type of decision trees, risk scores, which are interpretable models on their own. In Section~\ref{subsec:robust} we show how to use \cite{freund1995boosting} together with our weak learnability theorem to construct a risk score model. We also show that this model is robust. This concludes our quest of finding a model that is \emph{guaranteed} to be robust, interpretable, and have high-accuracy under the linearity separable assumption. In Section~\ref{sec:experiment} we will evaluate the model on several real datasets. 

\subsection{Weak learner}\label{subsec:weak_learner}
This section shows that under the linearity assumption, we can always find a feature that gives nontrivial information, which is formally defined using the concept of a \emph{weak learner} class. We say that a class $\H$ is a weak learner if for every function $f$ that is $\gamma$-linearly separable and for every distribution $\mu$ over the examples, there is hypothesis $h\in\H$ such that $\Pr_{x\sim \mu}(h(x)=f(x))$ is strictly bigger than $1/2$, preferably at least $1/2+\Omega(\gamma)$.     
Finding the best hypothesis in $\H_t$ can be done efficiently using dynamic programming \cite{shalev2014understanding}. The interesting question is how to prove that there must be a weak learner in $\H_t$. This will be the focus of the current section. 

One might suspect that if the data is linearly separable by the vector $w$ (i.e., for each labeled example $(x,y)$ it holds that $y wx\geq\gamma$), then $h_i$ which corresponds to the highest  $w_i$ is a weak learner. Conversely, if $w_i$ is small, then the corresponding hypotheses $h_i$ will have a low accuracy. These claims are not true. To illustrate this, think about the extreme example where $w_1=0$ but $x_1$ completely predicts the output of any example $x$. From the viewpoint of $w$, the first feature is irrelevant, as it does not contribute to the term $w\cdot x$, but the first feature is a perfect predictor.

To prove that there is always a hypothesis in $\H_t$ with accuracy $0.5+\Omega(\gamma)$, we view $\H_t$ as a graph. Namely, define a bipartite graph where the vertices are the examples and the hypotheses and there is an edge between a hypothesis $h$ and example $x$ if $h$ correctly predicts $x$. The edges of the graph are defined so that (i) the degree of the hypotheses vertices corresponds to its accuracy and (ii) the linearity assumption ensures that the degree of the example vertices is high. These two properties of the graph proves the theorem. All proofs are in Section~\ref{apx:weak_learner}.

\begin{restatable}{thm}{WeakLearner}
\label{thm:weak_learner}
Fix $\alpha>0$. For any data in $[-1,1]^d\times\{-1,1\}$ that is labeled by a  $\gamma$-linearly separable hypothesis $f$ and for any distribution $\mu$ on the examples, there is a hypothesis $h\in\H_t$  such that $$\Pr_{x\sim \mu}(h(x) = f(x)) \geq  \frac12+\frac{\gamma}{2}-\alpha.$$
\end{restatable}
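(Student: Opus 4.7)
The plan is to use a probabilistic-method argument on a carefully chosen distribution over $\H_t$, which corresponds exactly to the weighted bipartite graph hinted at in the paper. Assume WLOG (as the paper already notes) that $w_i \geq 0$ and $\|w\|_1 = 1$, so $w$ itself is a probability distribution over the $d$ coordinates. Define a distribution $\D$ on $\H_t$ by sampling a feature index $i$ with probability $w_i$, then sampling a threshold $\theta$ uniformly from $[-1,1]$, and outputting $h_{i,\theta}$.

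The key per-example computation is to show that for every labeled $(x,y)$ in $[-1,1]^d \times \{-1,+1\}$,
$$\Pr_{h \sim \D}[h(x)=y] \;=\; \sum_{i=1}^d w_i \cdot \Pr_{\theta \sim \mathrm{Unif}[-1,1]}[h_{i,\theta}(x) = y] \;=\; \sum_{i=1}^d w_i \cdot \frac{1 + y x_i}{2} \;=\; \frac{1 + y(w\cdot x)}{2},$$
because for $y=+1$ the stump is correct iff $\theta \leq x_i$, giving probability $(1+x_i)/2$, and for $y=-1$ it is correct iff $\theta > x_i$, giving probability $(1-x_i)/2$; both agree with $(1+yx_i)/2$. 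The $\gamma$-linear separability assumption $y(w\cdot x) \geq \gamma$ then gives a per-example lower bound of $(1+\gamma)/2$.

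The bipartite-graph view makes the conclusion immediate: place examples on the left weighted by $\mu$, stumps on the right weighted by $\D$, and put an edge between $h$ and $x$ iff $h(x)=y$. Counting the total (weighted) edge mass by examples gives at least $(1+\gamma)/2$ from the display above; counting it by hypotheses gives $\E_{h \sim \D}[\Pr_{x\sim\mu}(h(x)=y)]$. Hence some hypothesis $h \in \H_t$ achieves accuracy at least $\frac{1}{2}+\frac{\gamma}{2}$, which already matches the stated bound before the $\alpha$ slack.

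The main technical obstacle is that $\D$ has continuous support in $\theta$, so ``there exists $h$'' from a supremum needs a small argument to turn an expectation bound into an existence bound for a single hypothesis. This is exactly where the $\alpha$ appears: I would discretize $[-1,1]$ into a uniform grid of $\lceil 1/\alpha \rceil$ thresholds and replace $\mathrm{Unif}[-1,1]$ by the uniform distribution on this grid. The per-example probability changes by at most $O(\alpha)$ because $\Pr_\theta[\theta \leq x_i]$ is $1$-Lipschitz in the grid spacing, which absorbs cleanly into the stated $\alpha$ slack; the resulting distribution has finite support, so the averaging step picks out an honest witness $h \in \H_t$ with $\Pr_{x\sim\mu}[h(x)=f(x)] \geq \tfrac{1}{2}+\tfrac{\gamma}{2}-\alpha$.
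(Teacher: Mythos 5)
Your proposal is correct and is essentially the paper's own argument: the paper also places the weight $w_i$ (split uniformly over a grid of thresholds) on the stumps, lower-bounds the weighted edge mass of the example--hypothesis bipartite graph per example via $y(w\cdot x)\geq\gamma$, and extracts a witness by the probabilistic method, with the $\alpha$ coming from exactly the same threshold discretization ($\ell\approx 2/\alpha$ grid points). The only difference is ordering --- you compute the exact continuous probability $(1+y\,w\cdot x)/2$ first and discretize at the end, while the paper discretizes up front and counts grid thresholds --- which is a cosmetic distinction, not a different route.
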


As a side note, in \cite{shalev2008equivalence}, a different connection between linear separability and weak learning was formed. They view each example in the hypotheses basis, and on this basis, the famous minimax theorem implies that linearity is equivalent to weak learnability. In this paper, we focus on the case that the data, \emph{in its original form}, is linearly separable. Nonetheless, in the case where the features are binary, the two views, the original and hypotheses basis, coincide. %
However, our method also holds for the nonbinary case. %

So far, we showed the existence of hypothesis in $\H_t$ with accuracy $0.5+\Omega(\gamma)$. Standard arguments in learning theory imply that the hypothesis that maximizes the accuracy on a sample also has accuracy $0.5+\Omega(\gamma)$. Specifically, for any sample $S$, denote by $h_S$ the best hypothesis in $\H_t$ on the sample $S$.
Basic arguments in learning theory shows that for a sample of size $m=O\left(\nicefrac{d+\log\frac1\delta}{\gamma^2}\right)$, the hypothesis $h_S$ has a good accuracy, as the following theorem proves.

\begin{restatable}[weak-learner]{thm}{WeakLearnerDis}
\label{thm:weak_learner_dis}
Fix $\alpha>0$. 
For any distribution $\mu$ over  $[-1,+1]^d\times\{-1,+1\}$ that satisfies linear separability with a $\gamma$-margin, and for any $\delta\in(0,1)$ there is $m=O\left(\frac{d+\log\frac1\delta}{\gamma^2}\right)$, such that with probability at least $1-\delta$ over the sample $S$ of size $m$, it holds that $$\Pr_{(x,y)\sim\mu}(h_S(x)=y)\geq \frac12+\frac{\gamma}{4}-\alpha.$$ 
\end{restatable}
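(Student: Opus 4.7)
The plan is to derive this sample-complexity bound as a routine consequence of the existential weak-learner result (Theorem~\ref{thm:weak_learner}) combined with standard empirical-risk-minimization uniform-convergence arguments for the class $\H_t$. The high-level template is: the population-optimal stump already achieves accuracy $\tfrac12+\tfrac{\gamma}{2}-\alpha$, and a large enough sample forces the empirical maximizer $h_S$ to be almost as good.

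First I would apply Theorem~\ref{thm:weak_learner} to the data distribution $\mu$ (marginalized onto the examples, using the labeling function implied by $\mu$; note that $\gamma$-linear separability with respect to $\mu$ means $y\,w\cdot x\ge\gamma$ almost surely, so the hypothesis view and the labeled-distribution view coincide). This yields a fixed hypothesis $h^\star\in\H_t$ with $\Pr_{(x,y)\sim\mu}(h^\star(x)=y)\ge \tfrac12+\tfrac{\gamma}{2}-\alpha$. Then I would bound the sample complexity needed to uniformly estimate accuracies over $\H_t$ to within $\varepsilon:=\gamma/8$. The class $\H_t$ is the union, over the $d$ coordinates, of one-dimensional thresholds $\{x\mapsto \mathrm{sign}(x_i-\theta)\}$, each of VC-dimension $1$; so $\H_t$ has VC-dimension at most $O(\log d)$, and in particular the standard VC generalization bound gives that with probability $1-\delta$ over an i.i.d.\ sample of size
\[
 m \;=\; O\!\left(\frac{d+\log(1/\delta)}{\gamma^{2}}\right),
\]
every $h\in\H_t$ has $|\widehat{\mathrm{err}}_S(h)-\mathrm{err}_\mu(h)|\le\varepsilon$. (Alternatively, one can avoid the log-$d$-VC computation by a union bound over the $d$ coordinates combined with the $O((1+\log(1/\delta))/\varepsilon^{2})$ sample bound for threshold functions; the resulting $m$ is the same up to constants.)

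Finally I would combine the two steps. By empirical optimality of $h_S$, the empirical accuracy of $h_S$ is at least the empirical accuracy of $h^\star$, and by uniform convergence that is at least
\[
\tfrac12+\tfrac{\gamma}{2}-\alpha-\varepsilon.
\]
Applying uniform convergence once more in the other direction transfers this to the population: $\mathrm{acc}_\mu(h_S)\ge \tfrac12+\tfrac{\gamma}{2}-\alpha-2\varepsilon = \tfrac12+\tfrac{\gamma}{4}-\alpha$, as required.

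The main obstacle is essentially bookkeeping rather than conceptual: one must be careful that the existence result of Theorem~\ref{thm:weak_learner} is applied to the same distribution we later sample from (as opposed to an arbitrary reweighting), and one must choose the resolution $\varepsilon$ and the confidence parameter so that the final additive slack is exactly $\gamma/4$. A minor subtlety is that $\H_t$ is a continuous class in $\theta$, but since the empirical risk is piecewise constant in $\theta$ it suffices to work with the at most $m+1$ distinct thresholds per coordinate; this is what makes the VC/uniform-convergence step rigorous and also what makes the ERM $h_S$ well-defined and efficiently computable by the dynamic-programming routine mentioned before Theorem~\ref{thm:weak_learner}.
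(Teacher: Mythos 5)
Your proposal is correct and follows essentially the same route as the paper: invoke the existential weak-learner result (Theorem~\ref{thm:weak_learner}) to get a population-optimal stump $h^\star$ with accuracy $\tfrac12+\tfrac\gamma2-\alpha$, then use uniform convergence / the fundamental theorem of statistical learning over $\H_t$ to argue the empirical maximizer $h_S$ loses at most an additional $\gamma/4$, with $m=O\!\left(\frac{d+\log\frac1\delta}{\gamma^2}\right)$ samples. The only cosmetic difference is that you bound $VC(\H_t)$ by $O(\log d)$ via a growth-function count, whereas the paper uses the cruder bound $VC(\H_t)\le d$; both are sufficient for the stated sample complexity.
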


\subsection{Decision tree using CART}\label{subsec:id3}
CART is a popular algorithm for learning decision trees. In \cite{kearns1999boosting} it was shown that if the internal nodes define a $\gamma$-weak learner and number of samples is some polynomial of $t\log(1/\delta)d$, then a CART-type algorithm returns a tree with size $t=1/\epsilon^{O(1/\gamma^2)}$ and accuracy at least $1-\epsilon$, with probability at least $1-\delta.$  Under the linearity assumption, we know that the internal nodes indeed define a $\gamma$-weak learner by Theorem~\ref{thm:weak_learner_dis}. Thus, we get a model with a tree size independent of the training size and the dimension. But the model is not necessarily robust.  

The above results can be interrupted as a proof for the CART's algorithm success. This proof does not use the strong assumption of feature independence, which is assumed in recent works \cite{brutzkus2020id3,brutzkus2019optimality,blanc2019top,blanc2020provable,fiat2004decision}.

Designing robust decision trees is inherently a difficult task. 
The reason is that, generally, the model defined by the right and left subtrees can be completely different. The feature $i$ in the root determines if the model uses the right or left subtrees. Thus, a small change in the $i$-th feature completely changes the model. %
To overcome this difficulty we focus on a specific type of decision tree, risk scores \cite{ustun2019learning}, see Table~\ref{tab:risk_score} for an example. In the decision tree that corresponds to the risk score, the right and left subtrees are the same. In the next section, we design risk scores that have guarantees on the robustness and the accuracy.

\subsection{Risk score}\label{subsec:robust}

This section designs an algorithm that returns a risk score model with provable guarantees on its accuracy and robustness, assuming that the data is linearly separable. In the previous section, we used \cite{kearns1999boosting} that viewed CART as a boosting method. This section uses a more traditional boosting method --- the Boost-by-Majority algorithm (BBM) \cite{freund1995boosting}. This boosting algorithm gets as an input training data and an integer $T$, and at each step $t\leq T$ it reweigh the examples and apply a $\gamma$-weak learner that returns a hypothesis $h_t:\R^{d}\rightarrow\{-1,+1\}$. At the end, after $T$ steps, BBM returns $sign\left(\sum_{t=1}^T{h_t}\right).$
In \cite{freund1995boosting,schapire2013boosting} it was shown that BBM returns hypothesis with accuracy at least $1-\epsilon$  after at most $T=O(\gamma^{-2}\log(1/\epsilon))$ rounds. 

The translation from BBM, which uses $\H_t$ as a weak learner, to a risk score model, is straightforward. The hypotheses in $\H_t$ exactly correspond to the conditions in the risk score. Each condition has weight of $1$. If the number of conditions that hold is at least $T/2$ then our risk model returns $+1$, else it returns $-1.$ Together with Theorem~\ref{thm:weak_learner} and \cite{freund1995boosting} we get that BBM returns a risk score with accuracy at least $1-\epsilon$ and with $T=O(\gamma^{-2}\log(1/\epsilon))$ conditions.

We remark that other boosting methods, e.g., \cite{freund1997decision,kanade2009potential}, cannot replace BBM in the suggested scheme, since the final combination has to be a simple sum of the weak learners and \emph{not} arbitrary linear combination. The letter corresponds to a risk score where the weights are in $\R$ and not a small integer, as desired.

Our next and final goal is to prove that our risk score model is also robust. For that, we use the concept of \emph{monotonicity}.
For $x,y\in\R^d$, we say that $x\leq y$ if and only if for all $i\in[d]$ it holds that $x_i\leq y_i$.
A model $f:\R^d\rightarrow\{0,1\}$ is monotone if for all $x\leq y$ it holds that $f(x)\leq f(y).$ 
We will show that BBM with weak learners from $\H_t$ yields a monotone model.
The reasons being (i) all conditions are of the form ``$x_i\geq \theta$", (ii) all weights are non-negative, except the bias term, and (iii) classification of a risk score is detriment by the score's sign. 
All proofs appear in Section~\ref{apx:risk_score}.

\begin{restatable}{clm}{RsMonotone}
\label{clm:our_risk_monotone}
If every condition in a risk-score model $R$ is of the form ``$x_i\geq \theta$" and all weights are positive, except the bias term, then $R$ is a monotone model. 
\end{restatable}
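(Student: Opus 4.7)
The plan is to unfold the definition of the risk-score's output and chase inequalities through the score function. Write $R(x) = \mathrm{sign}(s(x))$ where
\[
s(x) \;=\; b \;+\; \sum_{j=1}^{m} w_j \,\mathbf{1}[\,x_{i_j} \geq \theta_j\,],
\]
with bias $b \in \mathbb{R}$ and positive weights $w_j > 0$.

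Fix any $x \leq y$ in the componentwise order. The key observation is that every condition of the allowed form is itself monotone: if the $j$-th condition ``$x_{i_j} \geq \theta_j$'' is satisfied, then $y_{i_j} \geq x_{i_j} \geq \theta_j$, so the same condition is satisfied by $y$. Hence $\mathbf{1}[x_{i_j} \geq \theta_j] \leq \mathbf{1}[y_{i_j} \geq \theta_j]$ for every $j$. Multiplying by $w_j > 0$ preserves the inequality, and summing together with the common bias $b$ yields $s(x) \leq s(y)$.

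Finally, since $\mathrm{sign}(\cdot)$ (with the convention used for the risk score's prediction, e.g. $+1$ iff the score is positive) is nondecreasing on $\mathbb{R}$, we obtain $R(x) = \mathrm{sign}(s(x)) \leq \mathrm{sign}(s(y)) = R(y)$. Identifying $\{-1,+1\}$ with $\{0,1\}$ as in the statement, this is exactly the monotonicity claim.

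There is no real obstacle here: the argument is a one-line monotonicity-of-thresholds observation combined with positivity of the weights. The only subtlety worth flagging in the write-up is that the assumption ``all weights positive, except the bias'' is essential only for the non-bias terms --- the bias cancels in the comparison $s(y) - s(x)$, which is precisely why its sign is unrestricted. It is also worth noting explicitly that if any condition were of the opposite form ``$x_i \leq \theta$'' or any $w_j$ were negative, the corresponding indicator contribution would flip direction under $x \leq y$ and monotonicity would fail; this motivated the choice of $\mathcal{H}_t$ in Section~\ref{subsec:weak_learner}.
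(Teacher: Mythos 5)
Your proof is correct and follows essentially the same route as the paper's: the same indicator-wise observation that $x \leq y$ implies $I_{x_{i_j}\geq\theta_j}\leq I_{y_{i_j}\geq\theta_j}$, followed by multiplying by the positive weights, summing (the bias being common to both scores), and applying monotonicity of the sign. Your closing remarks on why the bias sign is irrelevant and why conditions of the form ``$x_i\leq\theta$'' would break monotonicity match the aside the paper makes at the end of its own proof.
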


In Claim~\ref{clm:monotone_implies_robust} we show that, by carefully adding a small noise to each feature, we can transform any algorithm that returns a monotone model to one that returns a robust model.

\begin{restatable}{clm}{MonotoneImpliesRobust}
\label{clm:monotone_implies_robust}
Assume a learning algorithm $A$ gets as an input $\gamma$-linearly separable and returns a \emph{monotone} model with accuracy $1-\epsilon(\gamma)$. Then, there is an algorithm that returns a model with astuteness at least $1-\epsilon\left(\frac\gamma 2\right)$ at radius $\gamma/2.$
\end{restatable}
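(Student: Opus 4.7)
The plan is to construct the desired algorithm explicitly by a label-dependent shift of the training data: given an input sample from a $\gamma$-linearly separable distribution with separating vector $w$ (where, WLOG, $w_i\geq 0$ for all $i$ and $\|w\|_1=1$), transform each labeled example $(x,y)$ to $(\tilde x,y)$ where $\tilde x := x - y\cdot(\gamma/2)\cdot \mathbf 1$, run the given algorithm $A$ on the transformed sample, and output the monotone model $f:\R^d\to\{-1,+1\}$ that $A$ returns. The two things to verify are (i) that the shifted distribution is still linearly separable with a large enough margin so that the accuracy guarantee of $A$ kicks in, and (ii) that accuracy on the shifted distribution upgrades to astuteness on the original distribution via monotonicity.

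For (i), using $w\cdot \mathbf 1 = \sum_i w_i = \|w\|_1 = 1$, I would compute directly: for a positive example, $w\cdot \tilde x = w\cdot x - \gamma/2 \geq \gamma - \gamma/2 = \gamma/2$, and for a negative example, $w\cdot \tilde x = w\cdot x + \gamma/2 \leq -\gamma + \gamma/2 = -\gamma/2$. Hence the shifted distribution is $(\gamma/2)$-linearly separable by the same $w$, and the hypothesis returned by $A$ achieves accuracy at least $1-\epsilon(\gamma/2)$ on the shifted distribution, i.e., $\Pr_{(x,y)\sim\mu}\bigl[f(\tilde x)=y\bigr]\geq 1-\epsilon(\gamma/2)$.

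For (ii), I would use monotonicity of $f$ to argue that correct classification of $\tilde x$ already implies robustness in a $\gamma/2$-ball around $x$. Fix $(x,y)$ with $f(\tilde x)=y$ and any $x'\in \B(x,\gamma/2)$. If $y=+1$, then $x'\geq x - (\gamma/2)\mathbf 1 = \tilde x$ coordinatewise, so by monotonicity $f(x')\geq f(\tilde x)=+1$, hence $f(x')=y$. If $y=-1$, then $x'\leq x + (\gamma/2)\mathbf 1 = \tilde x$ coordinatewise, so $f(x')\leq f(\tilde x)=-1$, hence again $f(x')=y$. Combining with (i),
\begin{equation*}
\Pr_{(x,y)\sim\mu}\bigl[\forall x'\in\B(x,\gamma/2):\,f(x')=y\bigr]\;\geq\;\Pr_{(x,y)\sim\mu}\bigl[f(\tilde x)=y\bigr]\;\geq\;1-\epsilon(\gamma/2),
\end{equation*}
which is exactly the claimed astuteness bound.

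The only real obstacle is conceptual rather than technical: one has to choose the right label-dependent translation so that the monotonicity of $f$ converts a one-sided correctness statement at $\tilde x$ into a two-sided robustness statement on a whole $\ell_\infty$-ball around $x$; shifting positives down and negatives up by exactly $\gamma/2\cdot\mathbf 1$ achieves this while simultaneously costing only half of the original margin, which is what makes the bookkeeping work out.
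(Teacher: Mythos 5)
Your proposal is correct and follows essentially the same route as the paper: shift each example by $-y\frac{\gamma}{2}\mathbf{1}$, verify the shifted data remains $\frac{\gamma}{2}$-linearly separable (the paper only needs $w\cdot\mathbf{1}\leq\|w\|_1=1$, so even your WLOG is unnecessary), and use monotonicity to turn correct classification of the shifted point into correctness on the entire $\ell_\infty$-ball of radius $\gamma/2$ around the original point. No gaps.
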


To summarize, in Algorithm~\ref{alg:BBM_RS} we show the pseudocode of our new algorithm, BBM-RS. In the first step we add noise to each example by replacing each example $(x,y)$ by $(x-\tau y\mathbf{1},y)$, where $\tau\in(0,1)$ is a parameter that defines the noise level and $\mathbf{1}$ is the all-one vector. 
In other words, we add noise $y\tau$ to each feature.
In the second step, the algorithm iteratively adds conditions to the risk score. At each iteration, we first find the distribution $\mu$ defined by BBM \cite{freund1995boosting}. Then, we find the best hypothesis $h_{i,\theta}$ in $\H_t$, according to $\mu.$ We add to the risk score a condition  ``$x_i\geq\theta$". Finally, we add a bias term of $-T/2$, to check if at least half of the conditions are satisfied. 
\begin{algorithm} 
\caption{BBM-RS (BBM-Risk Score)}\label{alg:BBM_RS}
\begin{algorithmic}
\STATE \textbf{input:} $D$: linearly separable training data by $w$;\quad WLOG $\forall i. w_i\geq 0$
\STATE \quad\quad\quad $T$: bound on interpretation complexity 
\STATE \quad\quad\quad $\tau$: noise level 
\STATE \textbf{output:} risk score
\STATE \# Add noise:
\FOR {$(x,y)\in D$}
\STATE replace $(x,y)$ with  $(x-\tau y \mathbf{1},y)$
\ENDFOR
\FOR {$i=1\ldots T$}
\STATE $\mu\leftarrow$ BBM distrbution on $D$
\STATE $i,\theta \leftarrow \argmax_{i,\theta} \sum_{(x,y)\in D}\mu(x)I_{(x_i-\theta)y>0}$
\STATE Add condition ``$x_i\geq\theta$" to $RS$
\ENDFOR
\STATE Add a bias term of $-T/2$ to $RS$
\STATE \textbf{return} $RS$
\end{algorithmic}
\end{algorithm}

\section{Experiments}\label{sec:experiment}
In previous sections, we designed new algorithms 
and gave provable guarantees for separated data.
We next investigate these results on real datasets.
Concretely, we ask the following questions:
\vspace{-1em}
\begin{itemize}
    \setlength\itemsep{.05em}
    \item How separated are real datasets?
    \item How well does BBM-RS perform compared with other interpretable methods?
    \item How do interpretability, robustness, and accuracy trade-off with one another in BBM-RS?
\end{itemize}

\textbf{Datasets.}
To maintain compatibility with prior work on interpretable and robust decision trees~\cite{ustun2019learning,lin2020generalized}, we use the following pre-processed datasets
from their repositories -- adult, bank, breastcancer, mammo, mushroom,
spambase, careval, ficobin, and campasbin.
We also use some datasets from other sources such as 
LIBSVM~\cite{chang2011libsvm} datasets and \citet{moro2014data}.
These include diabetes, heart, ionosphere, and bank2.
All features are normalized to $[0, 1]$.
More details can be found in Appendix~\ref{app:experiment}.
The dataset statistics are shown in Table~\ref{tab:dataset_stats}.

\begin{table}[ht]
    \setlength{\tabcolsep}{2pt}
    \scriptsize
    \centering
\begin{tabular}{lcccc|cc|cc}
\toprule
{} & \multicolumn{4}{c}{dataset statistics} & \multicolumn{2}{c}{\thead{$r$-separation}} & \multicolumn{2}{c}{\thead{$\gamma$-linear \\ separation}} \\
{} &  \thead{\# \\ samples} &  \thead{\# \\ features} &  \thead{\# \\ binary \\ features} &  \thead{portion of \\ positive\\ label} & \thead{sep.} & \thead{$2r$} &  \thead{sep.} & \thead{$\gamma$}\\
\midrule
adult        &       32561 &           36 &                  36 &                          0.24 &                0.88 &  1.00 &                    0.84 & 0.001 \\
bank         &       41188 &           57 &                  57 &                          0.11 &                0.97 &  1.00 &                    0.90 & 0.33 \\
bank2        &       41188 &           63 &                  53 &                          0.11 &                1.00 &  0.0004 &                    0.91 & 0.00002 \\
breastcancer &         683 &            9 &                   0 &                          0.35 &                1.00 &  0.11 &                    0.97 & 0.0003 \\
careval      &        1728 &           15 &                  15 &                          0.30 &                1.00 &  1.00 &                    0.96 & 0.003 \\
compasbin    &        6907 &           12 &                  12 &                          0.46 &                0.68 &  1.00 &                    0.65 & 0.20 \\
diabetes     &         768 &            8 &                   0 &                          0.65 &                1.00 &  0.11 &                    0.77 & 0.0008 \\
ficobin      &       10459 &           17 &                  17 &                          0.48 &                0.79 &  1.00 &                    0.70 & 0.33 \\
heart        &         270 &           20 &                  13 &                          0.44 &                1.00 &  0.13 &                    0.89 & 0.0003 \\
ionosphere   &         351 &           34 &                   1 &                          0.64 &                1.00 &  0.80 &                    0.95 & 0.0007 \\
mammo        &         961 &           14 &                  13 &                          0.46 &                0.83 &  0.33 &                    0.79 & 0.14 \\
mushroom     &        8124 &          113 &                 113 &                          0.48 &                1.00 &  1.00 &                    1.00 & 0.02 \\
spambase     &        4601 &           57 &                   0 &                          0.39 &                1.00 &  0.000063 &                    0.94 & 0.000002 \\
\bottomrule
\end{tabular}
     \vspace{-0.5em}
    \caption{Dataset statistics.
    Columns ``sep.'' records the separateness of each dataset.
    Columns ``$2r$" and ``$\gamma$" are calculated after dataset is separated by removing $1-\text{sep}$ points. %
    }
    \label{tab:dataset_stats}
\end{table}

\subsection{Separation of real datasets}
To understand how separated they are, we measure the closeness of each dataset to being $r$- or linearly separated.
The \textit{separateness} of a dataset is one minus the fraction of examples needed to be removed
for it to be $r$- or linearly separated. %

For $r$-separation, we use the algorithm designed by \citet{yang2020robustness} that calculates the minimum number of examples needed to be removed for a dataset to be $r$-separated with $r \geq 10^{-5}$.
This ensures that after removal, there will be no pair of examples 
that are very similar but with different labels. Finding the optimal separateness for linear separation is NP-hard \cite{ben2003difficulty}, thus %
we run a $\ell_1$ regularized linear SVM with
regularization terms $C = \{10^{-10}, 10^{-8}, \ldots, 10^{10}\}$ and record the lowest training error as an approximation to one minus the optimal separateness.  %

The separation results are shown in Table~\ref{tab:dataset_stats}.
Eight datasets are already $r$-separated (separateness $=100\%$). In the five datasets with separateness $<$ 100\%, there are examples with very similar features but different labels. This occurs mostly in binarized datasets;  see Appendix~\ref{apx:additional_results} for an example. Three datasets are almost separated with separateness equal to $97\%$, $88\%$, and $83\%$, and two have separateness $68\%$ and $79\%$. To summarize, $84\%$ of the datasets are $r$-separated with $r\geq 10^{-5}$, after removing at most $17\%$ of the points.

Linear separation is a stricter property than $r$-separation, so the separateness for linear separation is smaller or equal to the separateness for $r$-separation.
Seven datasets have separateness $\geq 90\%$, three separateness between $79\%$ and $89\%$, and the remaining three have separateness $< 79\%$. 
After removing the points, all datasets are $\gamma$-linearly separable and nine datasets have  $\gamma\geq 0.001.$
To summarize, (i) $77\%$ of the datasets are close to being linearly separated (ii) requiring linear-separability reduces the separateness of the $r$-separated dataset by only an average of $6.77\%$.
From this we conclude that for these datasets at least, the assumption of $r-$ or 
linear-separability is approximately correct. %

\begin{table*}[ht]
    \footnotesize
    \setlength{\tabcolsep}{4pt}
    \centering
\begin{tabular}{lcccc|cccc|cccc}
\toprule
{} & \multicolumn{4}{c}{IC (lower=better)} & \multicolumn{4}{c}{test accuracy (higher=better)} & \multicolumn{4}{c}{ER (higher=better)} \\
{} &                DT &             RobDT &            LCPA &            BBM-RS &               DT &            RobDT &            LCPA &           BBM-RS &               DT &            RobDT &            LCPA &           BBM-RS \\
\midrule
adult        &  $414.20$ &         $287.90$ &          $14.90$ &  $\mathbf{6.00}$ &  $\mathbf{0.83}$ &  $\mathbf{0.83}$ &           $0.82$ &           $0.81$ &  $\mathbf{0.50}$ &  $\mathbf{0.50}$ &   $0.12$ &  $\mathbf{0.50}$ \\
bank         &   $30.70$ &          $26.80$ &           $8.90$ &  $\mathbf{8.00}$ &  $\mathbf{0.90}$ &  $\mathbf{0.90}$ &  $\mathbf{0.90}$ &  $\mathbf{0.90}$ &  $\mathbf{0.50}$ &  $\mathbf{0.50}$ &   $0.20$ &  $\mathbf{0.50}$ \\
bank2        &   $30.00$ &          $30.70$ &          $13.80$ &  $\mathbf{4.50}$ &  $\mathbf{0.91}$ &           $0.90$ &           $0.90$ &           $0.90$ &           $0.12$ &           $0.18$ &   $0.10$ &  $\mathbf{0.50}$ \\
breastcancer &   $15.20$ &           $7.40$ &  $\mathbf{6.00}$ &          $11.00$ &           $0.94$ &           $0.94$ &  $\mathbf{0.96}$ &  $\mathbf{0.96}$ &           $0.23$ &  $\mathbf{0.29}$ &   $0.28$ &           $0.27$ \\
careval      &   $59.30$ &          $28.20$ &          $10.10$ &  $\mathbf{8.70}$ &  $\mathbf{0.97}$ &           $0.96$ &           $0.91$ &           $0.77$ &  $\mathbf{0.50}$ &  $\mathbf{0.50}$ &   $0.19$ &  $\mathbf{0.50}$ \\
compasbin    &   $67.80$ &          $33.70$ &  $\mathbf{5.40}$ &           $7.60$ &  $\mathbf{0.67}$ &  $\mathbf{0.67}$ &           $0.65$ &           $0.66$ &  $\mathbf{0.50}$ &  $\mathbf{0.50}$ &   $0.15$ &           $0.33$ \\
diabetes     &   $31.20$ &          $27.90$ &           $6.00$ &  $\mathbf{2.10}$ &           $0.74$ &           $0.73$ &  $\mathbf{0.76}$ &           $0.65$ &           $0.08$ &           $0.08$ &   $0.09$ &  $\mathbf{0.15}$ \\
ficobin      &   $30.60$ &          $59.60$ &  $\mathbf{6.40}$ &          $11.80$ &           $0.71$ &           $0.71$ &           $0.71$ &  $\mathbf{0.72}$ &  $\mathbf{0.50}$ &  $\mathbf{0.50}$ &   $0.22$ &  $\mathbf{0.50}$ \\
heart        &   $20.30$ &          $13.60$ &          $11.90$ &  $\mathbf{9.50}$ &           $0.76$ &           $0.79$ &  $\mathbf{0.82}$ &  $\mathbf{0.82}$ &           $0.23$ &           $0.31$ &   $0.14$ &  $\mathbf{0.32}$ \\
ionosphere   &   $11.30$ &           $8.60$ &          $17.90$ &  $\mathbf{6.80}$ &           $0.89$ &  $\mathbf{0.92}$ &           $0.88$ &           $0.86$ &           $0.15$ &           $0.25$ &   $0.07$ &  $\mathbf{0.28}$ \\
mammo        &   $27.40$ &          $12.40$ &           $7.20$ &  $\mathbf{1.90}$ &  $\mathbf{0.79}$ &  $\mathbf{0.79}$ &  $\mathbf{0.79}$ &           $0.77$ &           $0.47$ &  $\mathbf{0.50}$ &   $0.21$ &  $\mathbf{0.50}$ \\
mushroom     &   $10.80$ &  $\mathbf{9.10}$ &          $23.80$ &           $9.90$ &  $\mathbf{1.00}$ &  $\mathbf{1.00}$ &  $\mathbf{1.00}$ &           $0.97$ &  $\mathbf{0.50}$ &  $\mathbf{0.50}$ &   $0.10$ &  $\mathbf{0.50}$ \\
spambase     &  $153.90$ &          $72.30$ &          $29.50$ &  $\mathbf{5.60}$ &  $\mathbf{0.92}$ &           $0.87$ &           $0.88$ &           $0.79$ &           $0.00$ &           $0.04$ &   $0.02$ &  $\mathbf{0.05}$ \\
\bottomrule
\end{tabular}
     \vspace{-.5em}
    \caption{Comparison of BBM-RS with other interpretable models. In bold: the best algorithm for each dataset and criterion.
    Note that several datasets (adult, bank, careval, compasbin, ficobin, and mushroom)
    have ER = $0.5$ for tree-based models (DT, RobDT, and BBM-RS), because these datasets have all binary features and tree-based models set the threshold in the middle of $0$ and $1$.
    }%
    \label{tab:bbm_cmp}
\end{table*}

\subsection{Performance of BBM-RS}
Next, we want to understand how our proposed BBM-RS performs on real datasets.
We compare the performance of BBM-RS with three different baselines
on three evaluation criteria: interpretability, accuracy, and robustness.

\textbf{Baselines.}
We compare BBM-RS with three baselines:
(i) LCPA~\cite{ustun2019learning}, an algorithm for learning risk scores,
(ii) DT~\cite{breiman1984classification}, standard algorithm for learning decision trees,
and (iii) Robust decision tree~(RobDT)~\cite{chen2019robust}, an algorithm for learning robust decision trees.

We use a $5$-fold cross-validation based on accuracy for hyperparameters selection.
For DT and RobDT, we search through $5, 10, \ldots 30$ for the maximum depth of the tree.
For BBM-RS, we search through $5, 10, \ldots 30$ for the maximum number of weak learners ($T$). 
The algorithm stops when it reaches $T$ iterations or if no weak learner can
produce a weighted accuracy $> 0.51$. 
For LCPA, we search through $5, 10, \ldots 30$ for the maximum $\ell_0$ norm of the weight vector.
We set the robust radius for RobDT and the noise level $\tau$ for BBM-RS to $0.05$.
More details about the setup of the algorithms can be found in Appendix~\ref{app:experiment}.

\subsubsection{Evaluation}

We evaluate interpretability, accuracy, and robustness of each baseline.
The data is randomly split into training and testing sets by 2:1.
The experiment is repeated $10$ times with different training and testing splits.
The mean and standard error of the evaluation criteria are recorded.

\textbf{Interpretability.}
We measure a model's interpretability by evaluating its \textit{Interpretation Complexity~(IC)}, which is the number of parameters in the model.
For decision trees (DT and RobDT) the IC is the number of internal nodes in the tree,
and for risk scores (LCPA and BBM-RS) it is the number non-zero terms in the weight vector.
The lower the IC is, the more interpretable the model is.

\textbf{Robustness.}
We measure model's robustness by evaluating its
\textit{Empirical robustness~(ER)}~\cite{yang2020robustness}.
ER on a classifier $f$ at an input $x$
is $ER(f, x) := \min_{f(x') \neq f(x)} \|x' - x\|_\infty$. 
We evaluate ER on $100$ randomly chosen
correctly predicted examples in the test set.
The larger ER is, the more robust the classifier is.

\subsubsection{Results}

The results are shown in Table~\ref{tab:bbm_cmp}
(only the means are shown, the standard errors can be found in Appendix~\ref{apx:additional_results}).
We see that BBM-RS performs well in terms of interpretability
and robustness.
BBM-RS performs the best on nine and eleven out of thirteen datasets in terms of interpretation complexity and robustness, respectively.
In terms of accuracy, in nine out of the thirteen datasets, BBM-RS is the best or within $3\%$ to the best. 
These results show that on most datasets, BBM-RS is better than other algorithms in IC and ER while being comparable in accuracy. 

\subsection{Tradeoffs in BBM-RS}
The parameter $\tau$ gives us the opportunity to explore the tradeoff between interpretability, robustness, and accuracy within BBM-RS.
Figure~\ref{fig:tau_tradeoff} shows that for small $\tau$, BBM-RS's IC is high, and its ER is low, and when $\tau$ is high, IC is low, and ER is high. %
This empirical observation strengthens the claim %
that interpretability and robustness are correlated.
See Appendix~\ref{apx:additional_results} for experiments on other datasets and experiments on the tradeoffs between IC and accuracy.

\begin{figure}[!h]
    \centering
    \includegraphics[width=.24\textwidth]{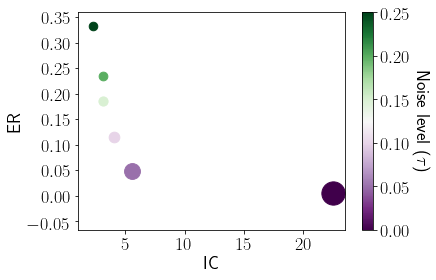}
    \vspace{-.8em}
    \caption{Interaction of interpretability, accuracy, and robustness with different noise level $\tau$ on the spambase dataset.
    The size of each ball represents the accuracy. For $\tau=0$: $IC=22.5, ER=0.006$ and for higher noise $\tau=0.25$: $IC=2.3,ER=0.33$}
    \label{fig:tau_tradeoff}
\end{figure}
\vspace{-1em}

\section{Conclusion}
We found that linear separability is a hidden property of the data that guarantees both interpretability and robustness. We designed an efficient algorithm, BBM-RS, that returns a model, risk-score, which we prove is  interpretable, robust, and have high-accuracy. An interesting open question is whether a weaker notion than linear separability can give similar guarantees.

\section*{Acknowledgements}
Kamalika Chaudhuri and Yao-Yuan Yang thank NSF under CIF 1719133 and CNS 1804829 for research support.

\bibliographystyle{icml2021}
\bibliography{main}

\newpage
\appendix
\onecolumn
\section{Proofs}

\subsection{Separation and interpretability}\label{apx:separation_and_explainability}
\unboundedLowerBound*
\begin{proof}
Fix an integer $s$ and $\gamma>0.$ We will start by describing the dataset and prove it is linearly separable. We will then show that any decision tree of size $s$ must have accuracy smaller than $1/2+\gamma.$
\paragraph{Dataset.}
The dataset size is $n$, to be fixed later. 
The dataset includes, for any $i=1\dots n/4$, a group, $G_i$, of four points: two points $(i,-i+\epsilon),(i+\epsilon,-i)$ that are labeled positive and two points $(i,-i-\epsilon),(i-\epsilon,-i)$ that are labeled negative for some small $\epsilon>0$, see Figure~\ref{fig:unbounded_lower_bound}. 
\begin{figure}[!h]
\centering
    \includegraphics[width=0.3\textwidth]{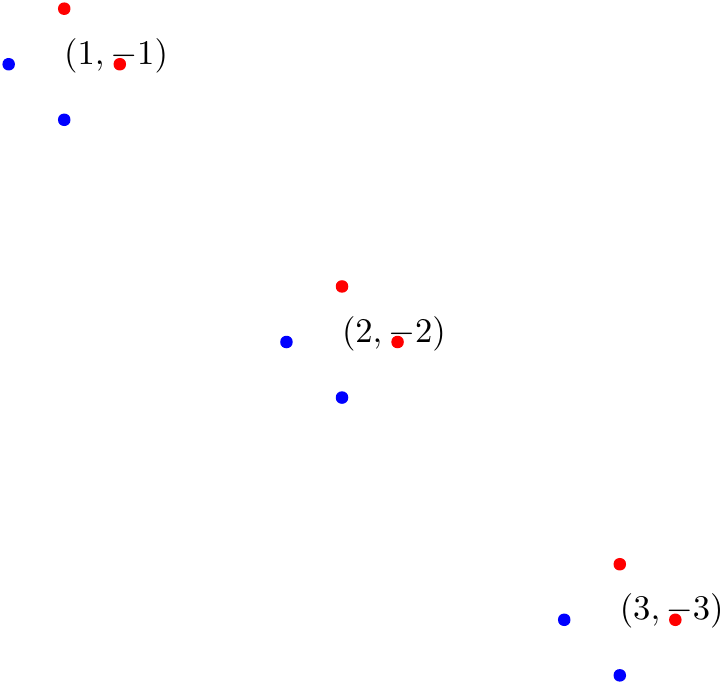}
    \caption{Proof of Theorem~\ref{thm:unbounded_lower_bound}. Linearly separable dataset that is not interpretable by a small decision tree. Around point $(i,-i)$ there are $4$ close points: two points $(i,-i+\epsilon),(i+\epsilon,-i)$ that are labeled positive and two points $(i,-i-\epsilon),(i-\epsilon,-i)$ that are labeled negative for some small $\epsilon>0$.}
    \label{fig:unbounded_lower_bound}
\end{figure}

To prove that the dataset is linearly separable focus on the vector $w=(0.5,0.5)$ with $|w|_1=1$. For each labeled examples $(x,y)$ in the dataset, the inner product is equal to $yw\cdot x=\epsilon/2>0$.

\paragraph{Accuracy.}

We will prove by induction that the points arriving in each node are a series of consecutive groups, perhaps except a few points that are in the ``tails". In each node, the number of positive and negative examples is about the same, so one cannot predict well. See intuition in Figure~\ref{fig:unbounded_lower_bound_projection}.

\begin{figure}[ht]
    \centering
    \includegraphics[scale=0.5]{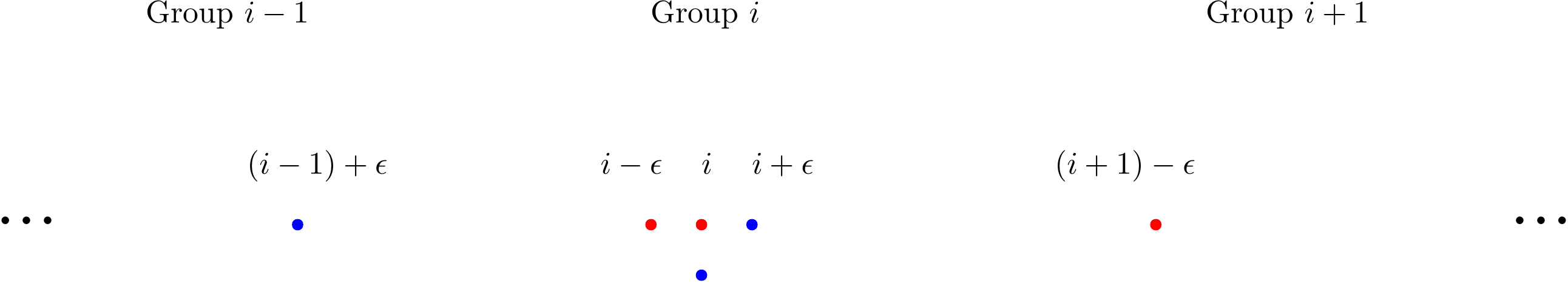}
    \caption{Projection of the points in the dataset to the first feature. We see that the groups appear one after another, each cut defined by an inner node in the tree, will leave the order between the groups as is. In a series of consecutive groups, the number of positive and negative examples is equal. Thus, the accuracy is close to $1/2$. Similar figure when projecting to the second feature.}
    \label{fig:unbounded_lower_bound_projection}
\end{figure}

More formally, a tail $G'_i$ is a subset of $G_i$, i.e.,  $G'_i\in\powerset(G_i)$, where $\powerset$ is the power set. 
We prove the following claim.  

 \textit{Claim.} For each inner node $v$, there are integers $j_0\leq j_1$ such that the points arriving to this node, $P_v$, satisfy $$P_v =  \cup_{i=j_0}^{j_1}G_{i}\cup G'_{j_0-1} \cup G'_{j_1+1},$$ 
 where $G'_{j_0-1}\in \powerset(G_{j_0-1})$ and $G'_{j_1+1}\in\powerset(G_{j_1+1}).$
 In other words, $P_v$ contains a series of consecutive groups and a few points from the group after and before the series.
 
  \begin{proof}
  We will prove the claim by induction on the level of the tree. The claim is correct for the root with $j_0=1$ and $j_1=n/4$. To prove the claim by induction, suppose an inner node uses a threshold $\theta$ and the first feature. Denote the closest integer to $\theta$ by  $j^*\leq \theta < j^*+1$. The points reaching the left son are
  $$\cup_{i=j_0}^{j^*}G_{i}\cup G_{j_0-1}'\cup G_{j^*+1}',$$ and the nodes reaching the right son are $$\cup_{i=j^*}^{j_1}G_{i}\cup G_{j^*-1}'\cup G_{j_1+1}'.$$
  A similar argument also hold if $v$ uses the second feature, which proves our claim. 
  \end{proof}

To finish the proof of Theorem~\ref{thm:unbounded_lower_bound}, first note that the number of positive and negative examples in each $G_i$ is exactly equal. Together with  the claim that we just proved, for each leaf $v$, the number of points that are correctly classified out of the points, $P_v$, reaching $v$, is at most $|P_v|/2+4.$ Thus the total number of points correctly classified by the entire tree is at most $n/2+4s.$ Thus, the accuracy of the tree is at most $1/2+4s/n.$ We take $n>4s/\gamma$ and get that the accuracy is smaller than $1/2+\gamma,$ which is what we wanted to prove. 

\end{proof}

\generalSepImpliesDt*
\begin{proof}
Each feature is in $[-1,1]$, so $r\leq 1.$
Fix $r \leq \Delta < 2r$ and $L=\lceil\frac{1}{\Delta}\rceil.$ We can bound $L$ by $$L\leq\frac{1}{\Delta}+1\leq\frac{1}{r}+1\leq\frac{2}{r}.$$
Take two data points, one labeled positive, $x^+$, and one negative $x^-$. By the $r$-separation, we know that there is a feature $i'$ such that 
$$ |x^+_{i'}-x^-_{i'}| \geq 2r>\Delta.$$ 
This means that we can find a threshold $\theta$ among the $2L+1$ thresholds  $ -L\cdot\Delta,\ldots, 0\cdot\Delta, 1\cdot\Delta,\ldots, L\cdot\Delta$ that distinguishes the examples $x^+$ and $x^-$, i.e., there is $j'\in\{-L,\ldots, L\}$, such that 
$$sign(x^+_{i'}-\Delta\cdot {j'})\neq sign(x^-_{i'}-\Delta\cdot {j'}).$$ 
We focus on the decision tree with all possible features and the $2L+1$ thresholds. All examples reaching the same leaf, has the same label. In other words, the training error is $0.$
Since there are $d\cdot (2L+1)$ pairs of feature and thresholds, the depth of the tree is at most $d(2L+1)\leq3dL\leq\frac{6d}{r}$. %

\end{proof}

\generalSepLowerDt*
\begin{proof}
We start with describing the dataset, then we will show that any decision tree must be large if we want to achieve $0.5+\gamma$ accuracy.

\paragraph{The dataset.} The inputs are all strings in $\{-1,1\}^d.$ The hypothesis is the parity function, i.e., and \begin{equation*}
f(x_1,\ldots,x_d) = \begin{cases} 1 &\mbox{if } \sum_{i=1}^d\frac{x_i+1}2 \equiv 1 \pmod{2}  \\
-1 & \mbox{o.w. }  \end{cases} 
\end{equation*}

\paragraph{Each node has equal number of positive and negative examples.} The main idea is that as long as the depth of a node is not $d$, it has exactly the same number of positive and negative examples reaching it. This is true since as long as at most $d-1$ features are fixed, there exactly the same number of positive and negative examples that agree on these features. 

\paragraph{Large size.} 
Denote by $N_1$ the set of all nodes that exactly one example reach them.  Denote this set size by $|N_1|=n_1.$ We want to prove that $n_1$ is large.  So far, we proved that for each node that contains more than one example, exactly half of the examples are labeled positive and half negative. 
Number of examples correctly classified is half of all the examples not in $N_1$ plus $n_1$. There are $2^d$ examples in total.
So the accuracy is equal to $\frac{(2^d-n_1)/2 + n_1}{2^d}=\frac12 + \frac{n_1}{2^{d+1}}$. The latter should be at least $\nicefrac12+\gamma$. Therefore, the size of the tree is at least $\gamma 2^d.$ 

\end{proof}

\subsection{Linear separability: weak learner}\label{apx:weak_learner}
We start with the more restricted case where the features are binary. This will give the necessary foundations for the general case, where features are in $[-1,1]$. In this case $\H_t$ is simplified to the set $\{ x\mapsto x_i : i\in[d]\}.$ 

\begin{restatable}{thm}{binaryWeakLearner}
\label{thm:binary_weak_learner}
For any data %
in 
$\{-1,1\}^d\times\{-1,1\}$ that is labeled by a $\gamma$-linearly separable a hypothesis $f$ and for any distribution $\mu$ on the examples, there is hypothesis $h\in\H_t$  such that $$\Pr_{x\sim \mu}(h(x) = f(x)) \geq  \frac12+\frac\gamma2.$$
\end{restatable}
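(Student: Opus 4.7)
The plan is to execute the bipartite-graph double-counting argument hinted at in the body of the paper, but in the simplified binary setting where $\H_t$ collapses (up to constants) to the $d$ ``coordinate'' hypotheses $h_i(x) = x_i$. I would fix a linear separator $w \in \R^d$ with $\|w\|_1 = 1$, and, by possibly flipping the sign of the $i$-th coordinate (which is a symmetry of $\{-1,+1\}^d$ and simply relabels the hypothesis $h_i$), reduce to the case $w_i \geq 0$ for all $i$. Then $w$ is a probability distribution on the feature indices $[d]$, which is exactly the structure we want in order to average.

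Next I would translate the linearity assumption into a statement about the fraction of correct features. For any labeled example $(x,y)$, the inequality $y\, w\cdot x \ge \gamma$ rewrites as $\sum_i w_i\, y x_i \ge \gamma$, and since $y x_i \in \{-1,+1\}$ with $y x_i = 1$ exactly when $h_i(x) = y$, together with $\sum_i w_i = 1$ this gives
\[
\sum_{i : h_i(x) = y} w_i \;\ge\; \frac{1+\gamma}{2}.
\]
In the bipartite picture, this is precisely the statement that every example vertex has weighted degree (under $w$) at least $\tfrac{1+\gamma}{2}$.

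The final step is to take expectation over $x \sim \mu$ and swap sums:
\[
\E_{i \sim w}\bigl[\Pr_{x \sim \mu}(h_i(x) = y(x))\bigr]
= \sum_x \mu(x) \sum_i w_i\, \mathbf{1}[h_i(x) = y(x)]
\;\ge\; \tfrac{1}{2} + \tfrac{\gamma}{2}.
\]
Since the left-hand side is an average of the accuracies of the $h_i$'s under the distribution $w$, by an averaging/pigeonhole argument at least one index $i^*$ must achieve $\Pr_{x \sim \mu}(h_{i^*}(x) = y(x)) \ge \tfrac{1}{2} + \tfrac{\gamma}{2}$, which is the hypothesis we pick from $\H_t$.

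I do not expect any real obstacle in this binary case: once the sign-flip reduction is in place, the argument is essentially a one-line double-counting. The only point that needs a little care is that the sign-flip does not change $\H_t$ as a class (it just relabels which coordinate is ``positive''), so the $h_{i^*}$ produced by the averaging argument is a bona fide element of the original hypothesis class. The genuine difficulty, which the paper defers to Theorem~\ref{thm:weak_learner}, is lifting this to non-binary features in $[-1,1]^d$ where one must also choose a threshold $\theta$, and where the clean identity $y x_i \in \{-1,+1\}$ breaks down and the slack $\alpha$ appears; here in the binary case no such slack is needed.
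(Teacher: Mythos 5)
Your proposal is correct and is essentially the paper's own proof: the same reduction to $w_i \ge 0$ with $\sum_i w_i = 1$, the same rewriting of $y\,w\cdot x \ge \gamma$ (using $yx_i \in \{-1,+1\}$) into $\sum_{i:\,h_i(x)=y} w_i \ge \tfrac{1+\gamma}{2}$, and the same averaging over $i \sim w$ followed by a pigeonhole step --- the paper merely phrases the exchange of sums as lower-bounding the weighted density of the example--hypothesis bipartite graph, which is the identical double count. The only caveat is your parenthetical that the sign-flip ``does not change $\H_t$ as a class'': strictly, flipping a coordinate turns $h_i$ into a reversed stump that $\H_t$ deliberately excludes, and the paper handles this not by that claim but by adopting $w_i \ge 0$ as a convention on how the data is presented (stated in Section~\ref{sec:linearly_separable} and in Algorithm~\ref{alg:BBM_RS}), so your treatment inherits exactly the paper's level of informality on this point.
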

\begin{proof}%
The proof's high-level idea is to represent the class as a bipartite graph and lower bound the weighted density of this graph. The high lower-bound leads to a high degree vertex, which will correspond to our desired weak learner. 

Recall that by the fact that the data is $\gamma$-linearly separable, we know that there is a vector $w$ with $|w|_1=1$ such that for eac labeled example $(x,y)$ it holds that 
\begin{equation}\label{eq:binary_weak_learner_separator}
  yw\cdot x\geq\gamma.  
\end{equation}

\paragraph{Bipartite graph description.}
Consider the following bipartite graph. The vertices are the $m$ examples in the training data and the $d$ hypotheses in the binary version of $\H_t$. There is an edge $(x,h_i)\in E$ between an example $x$ and hypothesis $h_i$, if it correctly classify $x$, i.e., if $f(x)=h_i(x)$. Each vertex is given a weight: example $x^j$ gets weight $\mu_j$ and a hypothesis $h_i$ gets weight $w_i$, where $w$ is in Equation~(\ref{eq:binary_weak_learner_separator}). 

\paragraph{Assumption: $\sum_{i=1}^d w_i=1$.}  We assume without loss of generality that $w_i\geq0$, otherwise if there is $w_i<0$, we can multiply the $i$-th feature in all the examples by $-1$. After this multiplication, the linearity assumption still holds.  
We know that $\sum_{i=1}^d |w_i|=1$, and since $w_i\geq 0$, we get that $\sum_{i=1}^d w_i=1$.

\paragraph{Lower bound weighted-density.}
The main idea is to lower bound the weighted density $\rho$ of the bipartite graph, which is the sum over all edges $(h_{i},x^j)$ in the graph, each with weight $w_i,\mu_j$: 
$$\rho=\sum_{(x^j,h_{i})\in E}w_i\mu_j.$$

To prove a lower bound on $\rho$, we focus on one labeled example $(x^j,y^j).$ 
From the linearity assumption, Equation~(\ref{eq:binary_weak_learner_separator}), we know that $\sum_{i=1}^dy^jw_ix^j_i\geq\gamma.$
Recall that $y^jx^j_i$ is equal to $+1$ or $-1$, since we are in the binary case. 
We can separate the sum depending on whether $y^jx^j_i$ is equal to $+1$ or $-1$ and get that $$\sum_{i:y^jx^j_i=1}w_i - \sum_{i:y^jx^j_i=-1}w_i\geq\gamma.$$
We know that $\sum_{i=1}^d w_i=1$, thus $\sum_{i:y^jx^j_i=-1}w_i = 1 - \sum_{i:y^jx^j_i=1}w_i$, and the inequality can be rewritten as  
$$2\sum_{i:y^jx^j_i=1}w_i - 1\geq\gamma.$$
Notice that $(x^j,h_i)\in E \Leftrightarrow y^jx^j_i=1$. Thus, the inequality can be further rewritten as 
$$\sum_{i:(x^j,h_i)\in E}w_i\geq\frac{1+\gamma}{2}.$$
This inequality holds for any labeled example, so we can sum all these inequalities, each with weight $\mu_j$ and get that  $$\rho\geq (1+\gamma)/2.$$

\paragraph{Finding a weak learner.} Since $w$ is a probability distribution, we can rewrite $\rho$ and get  $$\E_{i\sim w}\left[\sum_{j:(x^j,h_i)\in E}\mu_j\right]\geq (1+\gamma)/2.$$ 
From the probabilistic method \cite{alon2004probabilistic}, there is a hypothesis $h_i$ such that $$\sum_{j:(x^j,h_i)\in E}\mu_j\geq (1+\gamma)/2.$$ By the definition of the graph, $(x^j,h_i)\in E \Leftrightarrow h_i(x^j) = f(x^j)$. Thus, we get that  $$\Pr_{x\sim \mu}(h_i(x) = f(x)) \geq  \frac12+\frac\gamma2,$$ which is exactly what we wanted to prove.

\end{proof}

\begin{restatable}[binary weak-learner]{thm}{binaryWeakLearnerDis}
\label{thm:binary_weak_learner_dis}
For any distribution $\mu$ over labeled examples $\{-1,+1\}^d\times\{-1,+1\}$ that satisfies linear separability with a $\gamma$-margin, and for any $\delta\in(0,1)$ there is $m=O\left(\frac{d+\log\frac1\delta}{\gamma^2}\right)$, such that with probability at least $1-\delta$ over the sample $S$ of size $m$, it holds that $$\Pr_{(x,y)\sim\mu}(h_S(x)=y)\geq \frac12+\frac\gamma4.$$  
\end{restatable}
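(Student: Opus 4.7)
The plan is to deduce this from Theorem~\ref{thm:binary_weak_learner} via a textbook ERM / uniform-convergence argument, exploiting that $\H_t$ has only $d$ hypotheses in the binary-feature case.

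First, I would apply Theorem~\ref{thm:binary_weak_learner} directly to the distribution $\mu$, with $f$ the $\gamma$-linearly separable labeling function. This produces a \emph{fixed} reference hypothesis $h^\star \in \H_t$ satisfying $\Pr_{(x,y)\sim\mu}(h^\star(x)=y) \ge \tfrac12 + \tfrac{\gamma}{2}$. Note that Theorem~\ref{thm:binary_weak_learner} is distribution-free (it holds for every $\mu$), so we are free to apply it to the actual population distribution rather than to the empirical one.

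Second, since $|\H_t| = d$ in the binary setting, Hoeffding's inequality together with a union bound over the $d$ hypotheses gives, for every $\alpha > 0$,
\[ \Pr_{S\sim\mu^m}\Bigl(\exists h \in \H_t:\ |\widehat{\mathrm{acc}}_S(h) - \mathrm{acc}_\mu(h)| > \alpha\Bigr) \ \le\ 2d\,e^{-2m\alpha^2}. \]
Choosing $\alpha = \gamma/8$ and $m = \Theta\bigl((\log d + \log(1/\delta))/\gamma^2\bigr)$ bounds the right-hand side by $\delta$, which is comfortably within the claimed $O((d + \log(1/\delta))/\gamma^2)$ sample complexity (the claimed bound is slack by a $d/\log d$ factor but is nonetheless what we obtain).

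Third, on the high-probability uniform-convergence event, $h_S$ is by definition the empirical maximizer of accuracy over $\H_t$, so chaining the three inequalities yields
\[ \mathrm{acc}_\mu(h_S)\ \ge\ \widehat{\mathrm{acc}}_S(h_S) - \alpha\ \ge\ \widehat{\mathrm{acc}}_S(h^\star) - \alpha\ \ge\ \mathrm{acc}_\mu(h^\star) - 2\alpha\ \ge\ \tfrac12 + \tfrac{\gamma}{2} - \tfrac{\gamma}{4}\ =\ \tfrac12 + \tfrac{\gamma}{4}, \]
which is exactly the conclusion. The substantive work was already done inside Theorem~\ref{thm:binary_weak_learner} (the bipartite-graph density argument supplying $h^\star$); the present statement has no real obstacle beyond choosing $\alpha$ so that the two applications of uniform convergence consume exactly the $\gamma/4$ slack between $\gamma/2$ and $\gamma/4$.
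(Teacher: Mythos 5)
Your proof is correct and follows essentially the same route as the paper: obtain a fixed $h^\star\in\H_t$ with accuracy at least $\frac12+\frac\gamma2$ by applying Theorem~\ref{thm:binary_weak_learner} to the population distribution $\mu$, then argue that the empirical accuracy maximizer $h_S$ is within $\gamma/4$ of it via uniform convergence over the class. The only difference is the concentration tool: the paper invokes the fundamental theorem of statistical learning using $VC(\H_t)\leq d$, while you use Hoeffding's inequality with a union bound over the $d$ hypotheses of the binary class, which yields the slightly sharper $m=O\left(\frac{\log d+\log\frac1\delta}{\gamma^2}\right)$ and is in particular within the stated $O\left(\frac{d+\log\frac1\delta}{\gamma^2}\right)$ bound.
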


\begin{proof}
We start with a known fact\footnote{To bound the $VC(\H_t)$ note that for any $d+1$ points in $\R^d$ there is at least one point $v$, where in each coordinate it is not the largest one among the $d+1$ points. Thus, it's impossible that $v$ is labeled $+1$ while all the rest of the points are labeled $-1$.} that 
$$VC(\H_t)\leq d.$$

Denote the best hypothesis in the binary version in $\H_t$ as $h^*$. From Theorem~\ref{thm:binary_weak_learner}, we know that $\Pr_x(h^*(x) = f(x))\geq \frac12+\frac{\gamma}2.$
For every sample $S$, denote by $h_S$ the hypothesis in the binary version of $\H_t$ that optimizes the accuracy on the sample $S.$ 
From the fundamental theorem of statistical learning \cite{shalev2014understanding}, we know that for  $m=O\left(\frac{d+\log\frac1\delta}{(\gamma/4)^2}\right)$, with probability at least $1-\delta$ over the sample $S$ of size $m$, it holds that 
$$\Pr_{(x,y)\sim\mu}(h_S(x) = y) \geq \Pr_{(x,y)\sim\mu}(h^*(x) = y) -  \frac\gamma4\geq\frac12+\frac\gamma4.$$
\end{proof}

\WeakLearner*
\begin{proof}%
The proof is similar in spirit to the proof of Theorem~\ref{thm:binary_weak_learner}.
The main difference is the technique to lower bound $\rho$, the weighted density. In the current proof we use the fact that if for some positive example $x$, its $i$-th feature, $x_i$, is high, then it contains many edges in the graph. For a negative example $x$, if its $i$-th feature, $x_i$, is low, then it contains many edges in the graph.

\paragraph{Bipartite graph description.}
We first discretize the segment $[-1,1]$ to $\ell\geq2/\alpha+1$ values with $\Delta=2/(\ell-1)$:
$$Z=\{-1, -1+\Delta,\ldots,1-\Delta,1\}.$$ 
The value of $\Delta$ was chosen such that $|Z|=\ell.$
We focus on the following subclass of $\H'_t$ which is a discretization of $\H_t$
$$\H'_t = \{h_{i,\theta}\}\quad \text{ for } i\in [d], \theta\in Z.$$
We use a similar bipartite graph as in the proof of Theorem~\ref{thm:binary_weak_learner} for the subclass $\H'_t$: 
the vertices are the $m$ examples and the hypotheses in $\H'_t$; and there is an edge $(x,h_{i,\theta})\in E$ between an example $x$ with label $y$ and hypothesis $h_{i,\theta}$ whenever $yx_i\geq\theta$, i.e., when $h_{i,\theta}$ correctly classify $x$.

Recall that by the fact that the data is $\gamma$-linearly separable, we know that there is a vector $w$ with $|w|_1=1$ and for any labeled example $(x,y)$ it holds that 
\begin{equation}\label{eq:weak_learner_separator}
  yw\cdot x\geq\gamma.  
\end{equation}
From the same argument as in Theorem~\ref{thm:binary_weak_learner}, we assume that $\sum_{i=1}^d w_i=1.$

We are now ready to give each vertex in the bipartite graph a weight: example $x^j$ gets weight $\mu_j$ and a hypothesis $h_{i,\theta}$ gets weight $w_{i,\theta} = w_i/\ell$, where $w$ is as in Equation~(\ref{eq:weak_learner_separator}). The weights on the hypotheses were chosen such that they will sum up to $1$: 
\begin{equation}\label{eq:weak_learner_distrbution_over_hypotheses}
\sum_{i,\theta}w_{i,\theta}=\sum_{i,\theta}\frac{w_{i}}{\ell}=\sum_i w_i= 1.
\end{equation}

\paragraph{Lower bound weighted-density.}
We will lower bound the weighted density $\rho$ of the bipartite graph, which is the sum over all edges $(x^j,h_{i,\theta})$ in the graph, each with weight $w_{i,\theta}\mu_j$: 
$$\rho=\sum_{(x^j,h_{i,\theta})\in E}w_{i,\theta}\mu_j.$$
To show a lower bound on $\rho$, we focus on one labeled example $(x,y)$ and one feature $i$ and we will lower bound the following sum  
\begin{equation}\label{eq:weak_learner_fix_example_fix_feature}
\sum_{\theta:(x,h_{i,\theta})\in E}w_{i,\theta}=\frac{w_i}{\ell}\sum_{\theta:(x,h_{i,\theta})\in E}1.
\end{equation}
The sum in the RHS is equal to the number of $\theta$'s such that $yx^j_i\geq\theta$:
\begin{equation}
 d_{x,i}=|\{\theta \in Z : yx_i\geq\theta\}|.   
\end{equation}
To lower bound $d_{x,i}$ we separate the analysis depending on the label $y$.  
If $x$ is a positive example, i.e., $y=+1$, we have that   $$d_{x,i}\geq (1+x_i-\Delta)/\Delta$$
For a negative example $x$, we can lower bound $d_{x,i}$ by 
$$d_{x,i}\geq (1-x_i-\Delta)/\Delta$$
To summarize these two equations we get that 
$$d_{x,i}\geq (1+yx_i-\Delta)/\Delta$$
Going back to Equation~(\ref{eq:weak_learner_fix_example_fix_feature}),
$$\sum_{\theta:(x,h_{i,\theta})\in E}w_{i,\theta}\geq\frac{w_i}{\ell}\cdot\frac{1+yx_i-\Delta}{\Delta}=w_i(1+yx_i-\Delta)\cdot \frac{\ell-1}{2\ell}.$$
Summing over all features 
\begin{eqnarray}
\sum_{i,\theta:(x,h_{i,\theta})\in E}w_{i,\theta} &=& \frac{\ell-1}{2\ell}\left(\sum_i w_i(1-\Delta) +\sum_iyx_iw_i\right)\\&\geq&\frac{\ell-1}{2\ell}(1+\gamma-\Delta)
=\left(\frac12+\frac{\gamma}{2}-\frac\Delta2\right)\left(1-\frac1\ell\right)
\end{eqnarray}

Weighted sum over all examples yields the desired lower bound on $\rho$  
$$\rho\geq\left(\frac12+\frac{\gamma}{2}-\frac\Delta2\right)\left(1-\frac1\ell\right)\geq \frac{1}{2}+\frac\gamma2-\alpha.$$

\paragraph{Finding a weak learner.}
The proof now continues similarly to the proof of  Theorem~\ref{thm:binary_weak_learner}. 
Since $w_{i,\theta}$ is a probability distribution over all hypotheses in $\H'_t$ (see Equation~\ref{eq:weak_learner_distrbution_over_hypotheses}), we can rewrite $\rho$ and get  $$\E_{h_{i,\theta}\sim w}\left[\sum_{j:(x^j,h_{i,\theta})\in E}\mu_j\right]\geq \frac{1}{2}+\frac\gamma2-\alpha.$$ 
From the probabilistic method \cite{alon2004probabilistic}, there is hypothesis $h_i$ such that $$\sum_{j:(x^j,h_i)\in E}\mu_j\geq \frac{1}{2}+\frac\gamma2-\alpha.$$ By the definition of the graph, $(x^j,h_i)\in E \Leftrightarrow h_i(x^j) = f(x^j)$. Thus, we get that  $$\Pr_{x\sim \mu}(h_i(x) = f(x)) \geq  \frac{1}{2}+\frac\gamma2-\alpha,$$ which is exactly what we wanted to prove.

\end{proof}

\WeakLearnerDis*
\begin{proof}
Proof is similar to Theorem~\ref{thm:binary_weak_learner_dis}.
\end{proof}

\subsection{Linear separability: risk scores}\label{apx:risk_score}
\RsMonotone*
\begin{proof}%
Fix a risk score model $f$ which is defined by a series of $m$ conditions $``x_{i_1}\geq\theta_1",\ldots, ``x_{i_m}\geq\theta_m"$ and weights $w_0, w_1,\ldots, w_m$.
The score, $s(z)$, of an example $z$ is the weighted-sum of all satisfied conditions, $$s(z)=w_0+\sum_{j=1}^m w_jI_{z_j \geq \theta_j},$$ where $I_A=1$ if $A$ is true, and $I_A=0$ otherwise. The prediction of the model $f$ is equal to %
$$f(z)=sign(s(z)).$$

Fix examples $x,y\in\mathbb{R}^d$ with $x\leq y.$ Our goal is to show that $f(x)\leq f(y).$ The key observation is that any condition $x_{i_j}\geq\theta_j$ satisfied by $x$ is also satisfied by $y$ because $y_{i_j}\geq x_{i_j}\geq\theta_j,$ by our assumption that $x\leq y.$ In different words we have that 
\begin{equation}\label{eq:risk_score_monotone}
   I_{y_j \geq \theta_j}\geq I_{x_j \geq \theta_j}. 
\end{equation}

This implies that the score of $y$ is at least the score of $x$, since it holds that 
$$s(y)=w_0\sum_{j=1}^m w_jI_{y_j \geq \theta_j}\geq w_0+\sum_{j=1}^m w_j I_{x_j \geq \theta_j}=s(x).$$
Thus, we get exactly what we wanted to prove $f(y)=sign(s(y))\geq sign(s(x))=f(x).$

As an aside, at this point, it should become apparent why we restricted our conditions to be of the form $``x_{i_j}\geq\theta_j"$ and did not allow natural conditions of the form $``x_{i_j}\leq\theta_j"$, such conditions will not be monotone and Inequality~(\ref{eq:risk_score_monotone}) will not hold. 
\end{proof}

\MonotoneImpliesRobust*
\begin{proof}%
The high-level idea is to focus on a noisier distribution than the original one. The noise is small enough so that the new data will remain linearly separable with a (slightly worse) margin. Therefore, the learning algorithm can be applied. We call the learning algorithm with a sample from the noisy distribution and return its result. We will prove that a point correctly classified in the noisy dataset implies that the noiseless point is robust to adversarial examples.

\paragraph{Noisy data.}
Fix a data $D\subseteq \X\times\{-1,+1\}$ and a distribution $\mu$ on $D$. We will create a noisy distribution $\mu'$ 
on the labeled examples by mapping each example $(x,y)\in D$ to a new labeled example $(x',y)$ where
$$x'=x-y\frac\gamma2\mathbf{1},$$
where $\mathbf{1}$ is the vector of all $1.$
Thus, if $x$ is a positive labeled example ($y=+1$) then  $x'=x-\frac\gamma2\mathbf{1}$. This means that from each coordinate we decrease $\gamma/2$. Intuitively, we make $x$ looks more negative. Similarly, if $x$ is a negative labeled example ($y=-1$) we create a new example $x'=x+\frac\gamma2\mathbf{1}$, intuitively, making $x$ looks more positive by adding $\gamma/2$ to each coordinate.  

If we have a samples from $\mu$, then we can easily sample from $\mu'$ by subtracting $y\frac\gamma2\mathbf{1}$ from each labeled example $(x,y)$ in the sample.

\paragraph{Noisy data  is $\gamma/2$-linearly separable.}
Suppose that the original data $D$ is $\gamma$-separable. This means that there is a vector $w$ with $|w|_1=1$ such that for every labeled example $(x,y)\in D$ in it holds that $ y w\cdot x\geq \gamma.$
We know that the corresponding example in the noisy data is equal to $x'=x-y\frac\gamma2\mathbf{1}$. We will prove a lower bound on $yw\cdot x'$ and this will prove that the noisy input is also linearly separable with a margin. We will use the fact that $y^2=1$ for any label $y$ and get that 
$$yw\cdot x' = yw\left(x-y\frac\gamma2\mathbf{1}\right)=ywx-y^2w\cdot \frac\gamma2\mathbf{1} = ywx-w\cdot \frac\gamma2\mathbf{1}$$
Recall that $|w|_1=1$, which means that  $w\cdot\mathbf{1}=\sum_i w_i\leq \sum_i |w_i| = |w|_1 = 1.$ This means that $-w\cdot\frac\gamma2\mathbf{1}\geq -\gamma/2.$
Together with the fact that $yw\cdot x\geq\gamma$, we can now give the desired lower bounds on $yw\cdot x'$
$$yw\cdot x' \geq ywx - \gamma/2\geq \gamma-\gamma/2=\gamma/2.$$
In different words, the new data is $\gamma/2$-linearly separable.

\paragraph{Model is robust.}
In order to construct a robust model, we take our sample $S$ from $\mu$. Then we transform it to a sample from $\mu'$ by subtracting noise of $y\frac\gamma2\mathbf{1}$ to each labeled example $(x,y)$, and then we call algorithm $A$ with the noisy training data. 
From our assumption, the resulting model has accuracy $1-\epsilon(\frac\gamma 2)$. We will show that the returned model has astuteness of $1-\epsilon(\frac\gamma 2)$ at radius $\gamma/2$ with respect to $\mu.$ We do this by proving that if $(x',y)$ is correctly classified than the model is robust at $x$ with radius $\gamma/2.$ 

Fix a noisy labeled example $(x',y)$ and an example $z$ that is $\gamma/2$ close to $x$, in $\ell_{\infty}$, i.e., $\|x-z\|_\infty\leq\gamma/2$. 
 If $x$ is positively labeled then $x'$ is also positively labeled, and from the construction of the noisy dataset it holds that 
 $$ x'\leq z.$$ 
 Hence, from the monotone property of the model, if $x'$ is labeled correctly then so is $z$. A similar argument holds if $x$ is negatively labeled.

\end{proof}

\section{Additional Experiment Details}
\label{app:experiment}

\subsection{Setups}

The experiments are performed on a Intel Core i9 9940X machine with 128GB of RAM.
The code for the experiments is available at \url{https://github.com/yangarbiter/interpretable-robust-trees}.

\paragraph{Additional dataset details.}
The adult, bank, breastcancer, mammo, mushroom, and spambase datasets are
retrieved from a publicly available repository 
\footnote{\url{https://github.com/ustunb/risk-slim/tree/master/examples/data}},
and these datasets are used by \citet{ustun2019learning}.
The careval, ficobin, and campasbin datasets are also retrieved from a publicly available 
source\footnote{\url{https://github.com/Jimmy-Lin/GeneralizedOptimalSparseDecisionTrees/tree/master/experiments/datasets}} and  used by~\citet{lin2020generalized}.
We also added the diabetes, heart, and ionosphere dataset from~\footnote{\url{https://www.csie.ntu.edu.tw/~cjlin/libsvmtools/datasets/}} and
bank2 dataset from~\citet{moro2014data}.
All features are scaled to $[0, 1]$ by the following formula $(x – min) / (max – min)$, where 
$x$ represents the feature value, and $min$ and $max$ represents the minimum and maximum value
of the given feature across the entire data.

In the adult dataset, the target is to predict whether the person's income is greater then $50,000$.
For bank and bank2 datasets, we want to predict whether the client opens a bank account after a marketing call.
In the breastcancer dataset, we want to predict whether the given sample is benign.
In the heart dataset, we want to detect the presence of heart disease in a patient. 
In the mammo dataset, we want to predict whether the sample from the mammography is malignant.
In the mushroom dataset, we want to predict whether the mushroom is poisonous.
In the spambase dataset, we want to predict whether an email is a spam.
In the careval dataset, the goal is to evaluate cars.  
In the ficobin dataset,  we want to predict a person credit risk.
In the campasbin dataset, we want to predict whether a convicted criminal will re-offend again.
In the diabetes dataset, we want to predict whether or not the patients in the dataset have diabetes.
In the ionosphere dataset, we want to predict whether the radar data are showing some evidence of some type of structure in the ionosphere.

\paragraph{Baseline implementations}
For DT, we use the implementation from \texttt{scikit-learn}~\cite{scikit-learn}
and set the splitting criteria to entropy.
For LCPA and RobDT, we use the implementation from their
original authors\footnote{\url{https://github.com/ustunb/risk-slim} and \url{https://github.com/chenhongge/RobustTrees}}.

\paragraph{Measure empirical robustness.}
The IR for DT and RobDT can be measured using the method in 
\cite{kantchelian2016evasion,yang2020robustness}.
The IR for BBM-RS is measured using the method in
\cite{andriushchenko2019provably}.
The IR for LCPA can be measured by solving a
linear program.

\section{Additional Results}\label{apx:additional_results}

\subsection{Examples that are similar but labeled differently}

The compasbin dataset has the lowest $r$-separateness. Its binary features are:
\begin{itemize}
    \setlength\itemsep{-.2em}
    \item sex:Female
    \item age:$<21$
    \item age:$<23$
    \item age:$<26$
    \item age:$<46$
    \item juvenile-felonies:=0
    \item juvenile-misdemeanors:=0
    \item juvenile-crimes:=0
    \item priors:=0
    \item priors:=1
    \item priors:2-3
    \item priors:$>$3
\end{itemize}

There are $852$ people who are: male, age between $26$ to $46$, did not commit any juvenile felonies, misdemeanors, and crimes, and have more than $3$ previous criminal conviction.
These people will have the same feature vector while for their labels,
$542$ recidivate within two years while $310$ people did not.

\subsection{Relationship between explainability, accuracy, and robustness in BBM-RS}

To understand the interaction between explainability, accuracy, and robustness,
we measure these criteria of BBM-RS with different noise levels $\tau$.
The results are shown in Figure \ref{fig:tradeoff_noise_bbm}.
We observed that, by changing the noise level,
that robustness and the explanation complexity go hand in hand.
For higher noise levels, we have a higher robustness and lower explanation complexity and accuracy.
The shows that by making the model simpler, we can have better robustness and explainability while loosing
some accuracy.

\begin{figure}[ht!]
    \centering
    \subfloat[adult]{\includegraphics[width=.24\textwidth]{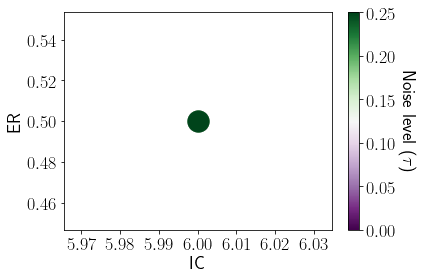}}
    \subfloat[bank]{\includegraphics[width=.24\textwidth]{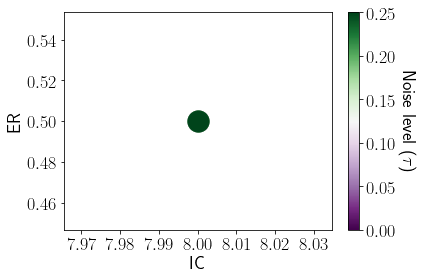}}
    \subfloat[bank2]{\includegraphics[width=.24\textwidth]{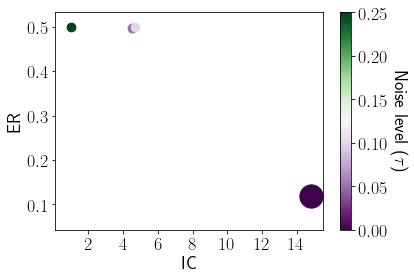}}
    \subfloat[breastcancer]{\includegraphics[width=.24\textwidth]{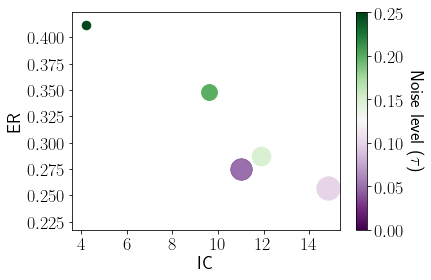}}

    \subfloat[careval]{\includegraphics[width=.24\textwidth]{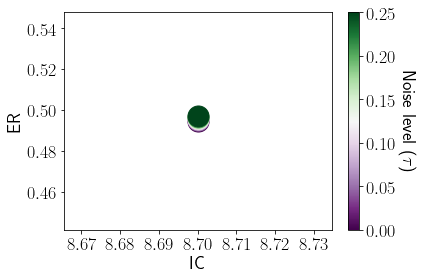}}
    \subfloat[compasbin]{\includegraphics[width=.24\textwidth]{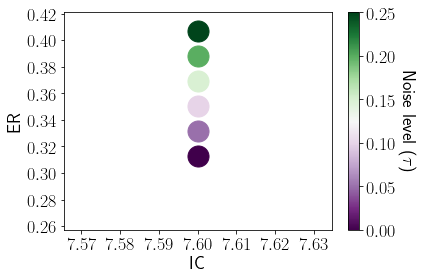}}
    \subfloat[diabetes]{\includegraphics[width=.24\textwidth]{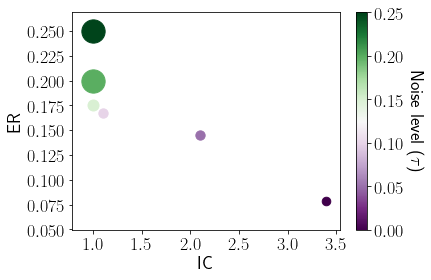}}
    \subfloat[ficobin]{\includegraphics[width=.24\textwidth]{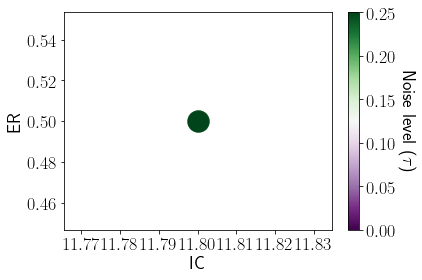}}

    \subfloat[heart]{\includegraphics[width=.24\textwidth]{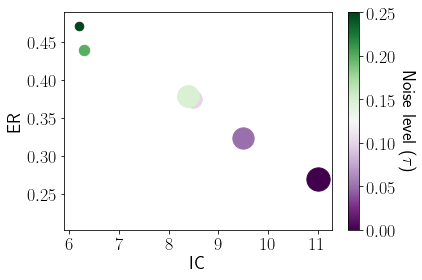}}
    \subfloat[ionosphere]{\includegraphics[width=.24\textwidth]{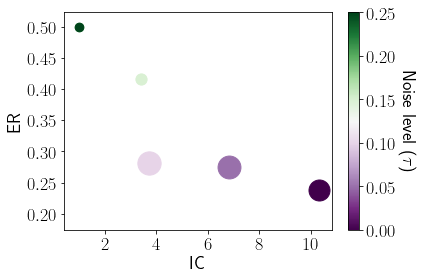}}
    \subfloat[mammo]{\includegraphics[width=.24\textwidth]{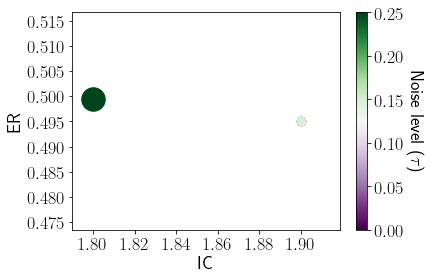}}
    \subfloat[mushroom]{\includegraphics[width=.24\textwidth]{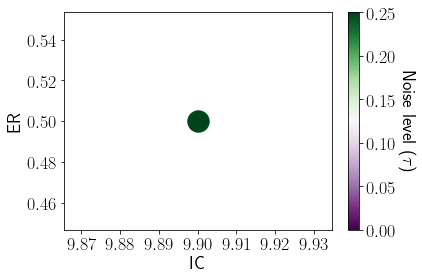}}

    \subfloat[spambase]{\includegraphics[width=.24\textwidth]{figs/bbm_tradeoffs/spambase.png}}

    \caption{Trade-off between explainability and accuracy for BBM-RS.
    The size of the ball represents the accuracy.
    }
    \label{fig:tradeoff_noise_bbm}
\end{figure}

\subsection{Trade-off between explanation complexity and accuracy for BBM-RS}

To understand how explanation complexity effects accuracy, we first train a BBM-RS classifier.
The learned BBM-RS classifier consists of $T$ weak learners.
We then measure the test accuracy of using only $i$ weak learners for prediction, where
$i=1 \ldots T$.
Finally, we plot out the figure of accuracy versus explanation complexity
(number of unique weak learners) in Table~\ref{fig:tradeoff_bbm}.
Note that for the same explanation complexity, there may be more then one test accuracy.
In this case we show the highest test accuracy.
In Table~\ref{fig:tradeoff_bbm}, we see that with the increase of explanation complexity,
generally the test accuracy also increases.

\begin{figure}[ht!]
    \centering
    \subfloat[adult]{\includegraphics[width=.24\textwidth]{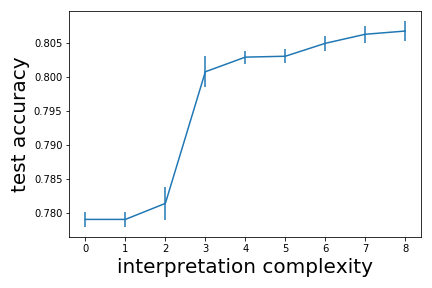}}
    \subfloat[bank]{\includegraphics[width=.24\textwidth]{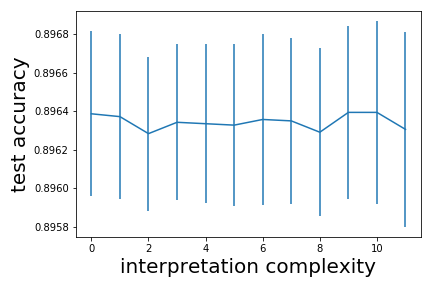}}
    \subfloat[bank2]{\includegraphics[width=.24\textwidth]{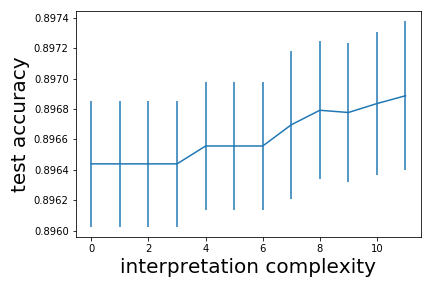}}
    \subfloat[breastcancer]{\includegraphics[width=.24\textwidth]{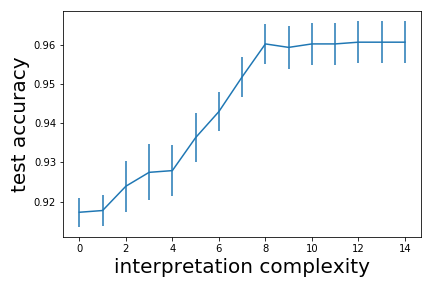}}

    \subfloat[careval]{\includegraphics[width=.24\textwidth]{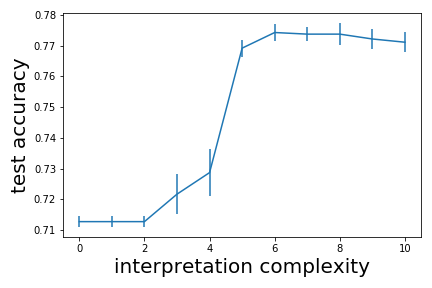}}
    \subfloat[compasbin]{\includegraphics[width=.24\textwidth]{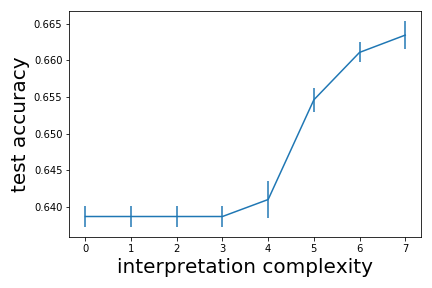}}
    \subfloat[diabetes]{\includegraphics[width=.24\textwidth]{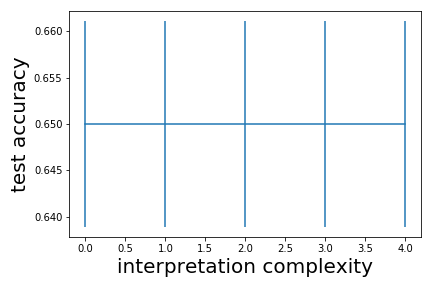}}
    \subfloat[ficobin]{\includegraphics[width=.24\textwidth]{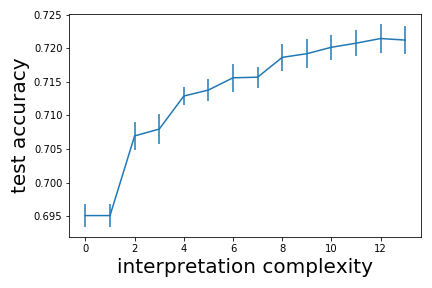}}

    \subfloat[heart]{\includegraphics[width=.24\textwidth]{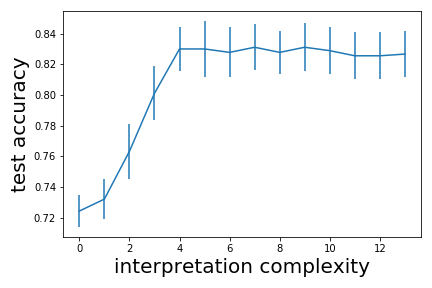}}
    \subfloat[ionosphere]{\includegraphics[width=.24\textwidth]{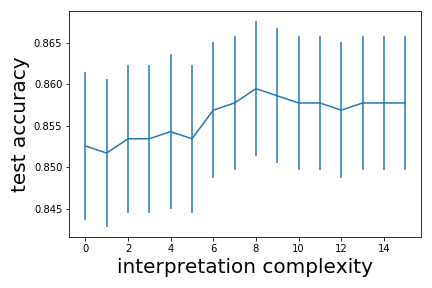}}
    \subfloat[mammo]{\includegraphics[width=.24\textwidth]{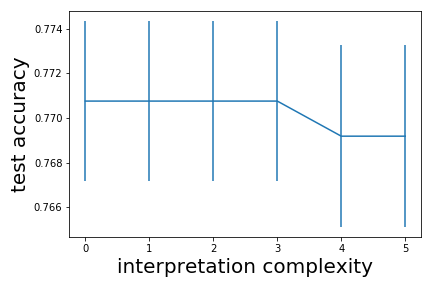}}
    \subfloat[mushroom]{\includegraphics[width=.24\textwidth]{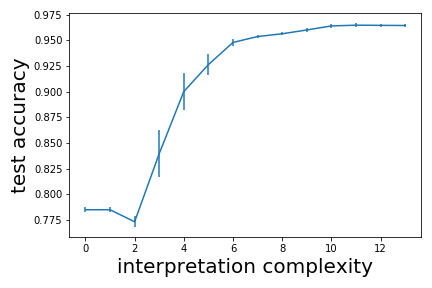}}

    \subfloat[spambase]{\includegraphics[width=.24\textwidth]{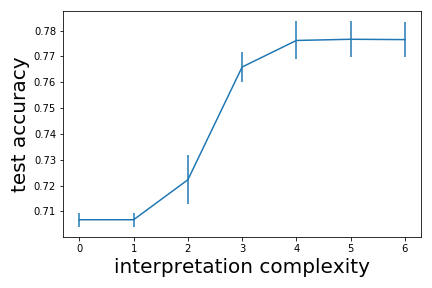}}

    \caption{Trade-off between explanation complexity and test accuracy for BBM-RS.}
    \label{fig:tradeoff_bbm}
\end{figure}

\begin{table*}[ht]
    \tiny
    \setlength{\tabcolsep}{2pt}
    \centering
\begin{tabular}{lcccc|cccc|cccc}
\toprule
{} & \multicolumn{4}{c}{EC} & \multicolumn{4}{c}{test accuracy} & \multicolumn{4}{c}{ER} \\
{} &                DT &             RobDT &            LCPA &            BBM-RS &               DT &            RobDT &            LCPA &           BBM-RS &               DT &            RobDT &            LCPA &           BBM-RS \\
\midrule
adult        &  $414.20 \pm 5.66$ &       $287.90 \pm 35.66$ &         $14.90 \pm 1.46$ &   $\mathbf{6.00 \pm .60}$ &  $\mathbf{0.83 \pm .00}$ &  $\mathbf{0.83 \pm .00}$ &           $0.82 \pm .00$ &           $0.81 \pm .00$ &  $\mathbf{0.50 \pm .00}$ &  $\mathbf{0.50 \pm .00}$ &  $0.12 \pm .02$ &  $\mathbf{0.50 \pm .00}$ \\
bank         &    $30.70 \pm .15$ &          $26.80 \pm .20$ &           $8.90 \pm .66$ &  $\mathbf{8.00 \pm 1.41}$ &  $\mathbf{0.90 \pm .00}$ &  $\mathbf{0.90 \pm .00}$ &  $\mathbf{0.90 \pm .00}$ &  $\mathbf{0.90 \pm .00}$ &  $\mathbf{0.50 \pm .00}$ &  $\mathbf{0.50 \pm .00}$ &  $0.20 \pm .03$ &  $\mathbf{0.50 \pm .00}$ \\
bank2        &    $30.00 \pm .30$ &          $30.70 \pm .15$ &         $13.80 \pm 1.54$ &  $\mathbf{4.50 \pm 1.34}$ &  $\mathbf{0.91 \pm .00}$ &           $0.90 \pm .00$ &           $0.90 \pm .00$ &           $0.90 \pm .00$ &           $0.12 \pm .01$ &           $0.18 \pm .02$ &  $0.10 \pm .01$ &  $\mathbf{0.50 \pm .00}$ \\
breastcancer &   $15.20 \pm 1.25$ &           $7.40 \pm .60$ &  $\mathbf{6.00 \pm .00}$ &           $11.00 \pm .89$ &           $0.94 \pm .00$ &           $0.94 \pm .01$ &  $\mathbf{0.96 \pm .00}$ &  $\mathbf{0.96 \pm .01}$ &           $0.23 \pm .01$ &  $\mathbf{0.29 \pm .01}$ &  $0.28 \pm .00$ &           $0.27 \pm .01$ \\
careval      &   $59.30 \pm 2.22$ &          $28.20 \pm .65$ &          $10.10 \pm .97$ &   $\mathbf{8.70 \pm .47}$ &  $\mathbf{0.97 \pm .00}$ &           $0.96 \pm .00$ &           $0.91 \pm .01$ &           $0.77 \pm .00$ &  $\mathbf{0.50 \pm .00}$ &  $\mathbf{0.50 \pm .00}$ &  $0.19 \pm .02$ &  $\mathbf{0.50 \pm .00}$ \\
compasbin    &  $67.80 \pm 13.01$ &         $33.70 \pm 3.05$ &  $\mathbf{5.40 \pm .22}$ &            $7.60 \pm .16$ &  $\mathbf{0.67 \pm .00}$ &  $\mathbf{0.67 \pm .00}$ &           $0.65 \pm .00$ &           $0.66 \pm .00$ &  $\mathbf{0.50 \pm .00}$ &  $\mathbf{0.50 \pm .00}$ &  $0.15 \pm .01$ &           $0.33 \pm .01$ \\
diabetes     &   $31.20 \pm 6.96$ &         $27.90 \pm 2.95$ &           $6.00 \pm .00$ &   $\mathbf{2.10 \pm .53}$ &           $0.74 \pm .01$ &           $0.73 \pm .01$ &  $\mathbf{0.76 \pm .01}$ &           $0.65 \pm .01$ &           $0.08 \pm .01$ &           $0.08 \pm .00$ &  $0.09 \pm .00$ &  $\mathbf{0.15 \pm .05}$ \\
ficobin      &    $30.60 \pm .22$ &        $59.60 \pm 29.82$ &  $\mathbf{6.40 \pm .16}$ &           $11.80 \pm .65$ &           $0.71 \pm .00$ &           $0.71 \pm .00$ &           $0.71 \pm .00$ &  $\mathbf{0.72 \pm .00}$ &  $\mathbf{0.50 \pm .00}$ &  $\mathbf{0.50 \pm .00}$ &  $0.22 \pm .01$ &  $\mathbf{0.50 \pm .00}$ \\
heart        &   $20.30 \pm 1.60$ &          $13.60 \pm .88$ &         $11.90 \pm 1.46$ &   $\mathbf{9.50 \pm .82}$ &           $0.76 \pm .01$ &           $0.79 \pm .01$ &  $\mathbf{0.82 \pm .01}$ &  $\mathbf{0.82 \pm .01}$ &           $0.23 \pm .02$ &           $0.31 \pm .02$ &  $0.14 \pm .01$ &  $\mathbf{0.32 \pm .02}$ \\
ionosphere   &    $11.30 \pm .98$ &           $8.60 \pm .76$ &         $17.90 \pm 3.14$ &  $\mathbf{6.80 \pm 1.96}$ &           $0.89 \pm .01$ &  $\mathbf{0.92 \pm .01}$ &           $0.88 \pm .01$ &           $0.86 \pm .01$ &           $0.15 \pm .01$ &           $0.25 \pm .01$ &  $0.07 \pm .01$ &  $\mathbf{0.28 \pm .01}$ \\
mammo        &   $27.40 \pm 5.09$ &          $12.40 \pm .65$ &           $7.20 \pm .65$ &   $\mathbf{1.90 \pm .60}$ &  $\mathbf{0.79 \pm .00}$ &  $\mathbf{0.79 \pm .00}$ &  $\mathbf{0.79 \pm .00}$ &           $0.77 \pm .00$ &           $0.47 \pm .01$ &  $\mathbf{0.50 \pm .00}$ &  $0.21 \pm .02$ &  $\mathbf{0.50 \pm .00}$ \\
mushroom     &    $10.80 \pm .25$ &  $\mathbf{9.10 \pm .10}$ &         $23.80 \pm 1.50$ &            $9.90 \pm .89$ &  $\mathbf{1.00 \pm .00}$ &  $\mathbf{1.00 \pm .00}$ &  $\mathbf{1.00 \pm .00}$ &           $0.97 \pm .00$ &  $\mathbf{0.50 \pm .00}$ &  $\mathbf{0.50 \pm .00}$ &  $0.10 \pm .01$ &  $\mathbf{0.50 \pm .00}$ \\
spambase     &  $153.90 \pm 8.51$ &         $72.30 \pm 2.89$ &          $29.50 \pm .76$ &   $\mathbf{5.60 \pm .48}$ &  $\mathbf{0.92 \pm .00}$ &           $0.87 \pm .00$ &           $0.88 \pm .00$ &           $0.79 \pm .01$ &           $0.00 \pm .00$ &           $0.04 \pm .00$ &  $0.02 \pm .00$ &  $\mathbf{0.05 \pm .00}$ \\
\bottomrule
\end{tabular}
     \caption{The comparison of BBM-RS with other interpretable models (with standard error).}
    \label{tab:bbm_cmp_full}
\end{table*}

\end{document}